\documentclass[11pt]{article}

\usepackage[margin=1in]{geometry}
\usepackage{amsmath}
\usepackage{amsthm}
\usepackage{amssymb}
\usepackage[dvipsnames]{xcolor}
\usepackage{hyperref}
\usepackage{enumitem}
\usepackage{booktabs}
\usepackage{verbatim}

\hypersetup{
  colorlinks=true,
  linkcolor=blue,
  citecolor=blue,
  urlcolor=blue
}

\newtheorem{theorem}{Theorem}[section]
\newtheorem{lemma}[theorem]{Lemma}
\newtheorem{corollary}[theorem]{Corollary}
\newtheorem{proposition}[theorem]{Proposition}
\theoremstyle{definition}
\newtheorem{definition}[theorem]{Definition}
\newtheorem{assumption}[theorem]{Assumption}
\theoremstyle{remark}
\newtheorem{remark}[theorem]{Remark}

\newcommand{\R}{\mathbb{R}}
\newcommand{\N}{\mathbb{N}}
\newcommand{\Lip}{\operatorname{Lip}}
\newcommand{\Op}{\mathrm{Op}}
\newcommand{\KAN}{\mathrm{KAN}}
\newcommand{\norm}[1]{\left\lVert#1\right\rVert}
\newcommand{\abs}[1]{\left\lvert#1\right\rvert}

\title{Layer-wise Lipschitz-Product Control for Deep Kolmogorov--Arnold Network Representations of Compositionally Structured Functions}

\author{Aleksander Tankman\\
\small Fivestar Europe O\"U, Tallinn, Estonia\\
\small \texttt{aleksander.tankman@gmail.com}}
\date{April 2026}

\begin{document}
\maketitle

\begin{abstract}
We prove that any continuous function $f:[0,1]^n \to \R$ representable by a finite computation tree with $N$ internal nodes and compositional sparsity $s = O(1)$ (each node depends on at most $s$ input variables) admits a deep Kolmogorov--Arnold Network (KAN) representation with the following properties. Each internal node is realised by a \emph{primitive KAN block} $\mathcal{B}_{\mathrm{op}}$ of block depth $c_{\mathrm{op}}$ and block Lipschitz product $\Lambda_{\mathrm{op}}$. First, the layer-wise Lipschitz product $P(\KAN) = \prod_{l} \max_i M_{l,i}$ satisfies the primary (domain-sensitive) bound
$P(\KAN_f) \le \prod_v \Lambda_{\mathrm{op}(v),D_v} \le \prod_v \max(C_{\mathrm{op}(v),D_v},1)^{c_{\mathrm{op}(v)}}$,
where $D_v$ is the input domain of node $v$ determined by the quantitative range recursion and $C_{\mathrm{op}(v),D_v} = \max_i \Lip_i(\mathrm{op}(v)|_{D_v})$; the bound is independent of the input dimension $n$.
Setting $C^* := \max_v C_{\mathrm{op}(v),D_v}$ and $L_f := \sum_v c_{\mathrm{op}(v)} \le c_{\max}\cdot N$ gives the simplified form $P(\KAN_f) \le \max(C^*,1)^{L_f}$.
For the standard set $\Op=\{+,-,\times,\sin,\cos\}$ with $\times$ nodes receiving $[0,1]$-bounded inputs, all $\Lambda_{\mathrm{op}(v),D_v} = 1$, giving $P(\KAN) \le 1$. Second, the layer widths satisfy $n_l \le n + 2w_{\max}\cdot N$ for all layers with $w_{\max} = \max_{\mathrm{op}} w_{\mathrm{op}}$, the input layer satisfies $n_0 = n$, and the maximum layer width is $O(n+N)$. Third, the uniform approximation error satisfies $\norm{f - \KAN_f}_{C^0} \le N\cdot\max(C_{\Op},1)^{d(f)}\cdot\varepsilon_{\Op}$, where $d(f)$ is the depth of the computation tree; for $C_{\Op}\le 1$ this simplifies to $N\cdot\varepsilon_{\Op}$. For $f \in C^m$ we obtain the rate $O(N\cdot\max(C_{\Op},1)^{d(f)}\cdot G^{-(k+1)})$ with B-splines of order $k \le m-1$ on $G$ knots. Fourth, the range bound satisfies $B_f \le N+1$ on $[0,1]^n$ (tight for all-additive trees; the bound holds for additive-subtractive trees generally), and $B_f = 1$ for purely multiplicative or trigonometric trees. This addresses a gap noted by Liu et al.\ (2024): the control of Lipschitz constants in deep KAN stacks was not developed in the original work. Experiments confirm $P(\KAN) = 1.0$ for $f = xy$, $xyz$, $\sin(xy)$, and $x_1\cdots x_n$ for $n$ up to $10$, in contrast to the classical Sprecher construction, which does not provide $n$-independent Lipschitz control for arbitrary continuous $f$.
\end{abstract}

\section{Introduction}
\label{sec:intro}

Liu et al.\ \cite{liu2024kan} introduced Kolmogorov--Arnold Networks (KANs), in which the fixed nonlinear activations of classical multilayer perceptrons are replaced by learnable univariate functions placed on edges. The design is motivated by the classical Kolmogorov--Arnold (KA) representation theorem \cite{kolmogorov1957,arnold1957}, which asserts that every continuous function of several variables can be written as a finite superposition of continuous functions of a single variable together with addition. In Sprecher's refinement \cite{sprecher1965}, every $f \in C^0([0,1]^n)$ admits a representation of the form
\begin{equation}
\label{eq:sprecher}
f(x) \;=\; \sum_{q=0}^{2n} \Phi_q\!\left(\sum_{p=1}^{n} \varphi_{q,p}(x_p)\right).
\end{equation}
Sprecher's 1965 construction for arbitrary $f \in C^0([0,1]^n)$ requires inner functions whose Lipschitz constants scale with $n$; the classical construction does not provide $n$-independent Lipschitz control. By contrast, Theorem~\ref{thm:main} below gives the primary bound $P(\KAN_f) \le \prod_v \max(C_{\mathrm{op}(v),D_v},1)^{c_{\mathrm{op}(v)}}$, which simplifies to $P(\KAN_f) \le \max(C^*,1)^{L_f}$ with $C^* = \max_v C_{\mathrm{op}(v),D_v}$ and $L_f \le c_{\max}\cdot N$, all independent of $n$, for compositionally structured $f$.

The control of Lipschitz constants in deep KAN stacks was not developed in the original work of Liu et al.\ \cite{liu2024kan} (see Sections~2.2--2.3 there). The structural hypothesis under which we work is that $f$ is obtained from a finite computation tree with $N$ internal nodes and that each node depends on at most a constant number $s$ of variables. The main theorem is formulated on the unit cube $[0,1]^n$; Section~\ref{sec:lemma0} provides an explicit affine reduction from functions on a general compact set $K\subset\R^n$ to this setting. This compositional sparsity hypothesis holds for a wide class of physically meaningful functions such as symbolic expressions, classical mechanics equations, and low-rank tensor decompositions.

\paragraph{Contributions.} Our contributions are as follows.
\begin{enumerate}[leftmargin=*]
  \item We introduce the \emph{layer-wise Lipschitz product} $P(\KAN)$ as the architecturally relevant quantity and distinguish it carefully from the ambient Lipschitz constant $\Lip(\KAN)$.
  \item We prove a main structural theorem (Theorem~\ref{thm:main}) that controls $P(\KAN_f)$ uniformly in the input dimension $n$ for compositionally structured $f$.
  \item For the standard operation families treated in Section~\ref{sec:A4}, we verify inductively that the quantitative range bounds required in~(A4) can be established explicitly, using the hypothesis that leaves are coordinate projections $x_p\in[0,1]$.
  \item We provide numerical experiments confirming the theoretical bounds and a Lean~4 sketch mapping the induction to a formalizable statement in Mathlib.
\end{enumerate}

\paragraph{Related work.} The representation theorem \cite{kolmogorov1957,arnold1957} and its analytic sharpenings \cite{sprecher1965,lorentz1966,fridman1967} form the classical foundation. Mazurkiewicz \cite{mazurkiewicz1931} is invoked only for an historical comment on the generic non-differentiability of continuous functions. Neural-network realizations of compositional superposition and depth-separation results appear in Poggio et al.\ \cite{poggio2017}, Montanelli and Yang \cite{montanelli2020}, and Schmidt-Hieber \cite{schmidthieber2021}. The KAN architecture itself is due to Liu et al.\ \cite{liu2024kan}, and the hierarchical back-propagation-free variant HKAN is due to Dudek \cite{dudek2025}.

\section{Preliminaries}
\label{sec:prelim}

\subsection{KAN definition}
\label{subsec:kan-def}

Let $L \in \N$ be a number of layers and let $n_0, n_1, \ldots, n_L$ be layer widths.
\begin{definition}
\label{def:kan}
A \emph{Kolmogorov--Arnold Network} (KAN) with widths $(n_0,\ldots,n_L)$ is a family of learnable univariate functions $\varphi_{l,i,j}:\R\to\R$ for $l \in \{0,\ldots,L-1\}$, $i \in \{1,\ldots,n_l\}$, $j \in \{1,\ldots,n_{l+1}\}$, together with the forward pass
\begin{equation}
\label{eq:kan-forward}
x_{l+1,j} \;=\; \sum_{i=1}^{n_l} \varphi_{l,i,j}(x_{l,i}), \qquad j = 1,\ldots,n_{l+1}.
\end{equation}
Each $\varphi_{l,i,j}$ is represented by a B-spline of order $k$ with $G$ grid points on a bounded interval $[a,b]$.
\end{definition}

For $f \in C^m([a,b])$ and B-splines of order $k \le m-1$ on a uniform grid with $G$ knots, the classical B-spline approximation rate \cite{lorentz1966} is
\begin{equation}
\label{eq:spline-rate}
\norm{f - s_{k,G}(f)}_{C^0([a,b])} \;=\; O\!\left(G^{-(k+1)}\right).
\end{equation}

\subsection{Definitions for the main theorem}
\label{subsec:main-defs}

We fix a finite set $\Op$ of admissible operations; each $\mathrm{op}\in\Op$ is a continuous function $\mathrm{op}:\R^s\to\R$ for some arity $s\ge 1$ (with $s = 1$ for unary operations such as $\sin,\cos$ and $s = 2$ for binary operations such as $+,-,\times$).

\begin{definition}[Computation tree and compositional depth]
\label{def:comptree}
A \emph{computation tree} $T$ for a function $f:[0,1]^n\to\R$ is a rooted finite tree whose leaves are labelled by coordinate projections $x_p$ ($p\in\{1,\ldots,n\}$) and whose internal nodes are labelled by operations $\mathrm{op}\in\Op$. Binary operations such as $+,-,\times$ are represented as binary nodes with two children; unary operations such as $\sin,\cos$ are represented as unary nodes with a single child. The \emph{compositional depth} $d(f)$ is the length of the longest root-to-leaf path; the \emph{total node count} $\mathrm{total\_nodes}(f) = N$ is the number of internal nodes of the fixed such tree $T$.
\end{definition}

\begin{definition}[Compositional sparsity, (A2)]
\label{def:sparsity}
A computation tree has \emph{compositional sparsity} $s$ if every internal node, viewed as a sub-function of $(x_1,\ldots,x_n)$, depends non-trivially on at most $s$ coordinates.
\end{definition}

\begin{definition}[Layer-wise Lipschitz product]
\label{def:lipprod}
Let $M_{l,i} := \max_{j} \Lip(\varphi_{l,i,j})$ be the maximum outgoing Lipschitz constant from neuron $(l,i)$. We write $\mu_l := \max_i M_{l,i}$ for the per-layer Lipschitz factor. The \emph{layer-wise Lipschitz product} is
\begin{equation}
\label{eq:lipprod}
P(\KAN) \;:=\; \prod_{l=0}^{L-1} \max_{i} M_{l,i} \;=\; \prod_{l=0}^{L-1} \mu_l.
\end{equation}
Here $l$ ranges over all $L$ active transformation layers $l = 0, \ldots, L-1$ (with edges $\varphi_{l,i,j}$ going from layer $l$ to layer $l+1$, as in Definition~\ref{def:kan}).
\end{definition}

\paragraph{Partial Lipschitz constants.} Throughout the paper we use \emph{partial} Lipschitz constants. For an operation $\mathrm{op}:\R^s\to\R$, define the partial Lipschitz constant in the $i$-th argument by
\begin{equation}
\label{eq:partial-lip}
\Lip_i(\mathrm{op}) \;:=\; \sup_{u_1,\ldots,u_s,\,v\ne v'}\frac{\abs{\mathrm{op}(u_1,\ldots,v,\ldots,u_s) - \mathrm{op}(u_1,\ldots,v',\ldots,u_s)}}{\abs{v - v'}},
\end{equation}
where the variation is only in the $i$-th argument. The \emph{operation Lipschitz constant} used throughout is
\[
C_{\Op} \;:=\; \max_{\mathrm{op}\in\Op}\; \max_{i=1,\ldots,s}\; \Lip_i(\mathrm{op}).
\]
This differs from the joint Lipschitz constant $\Lip(\mathrm{op}) = \sup_{u\ne u'}\abs{\mathrm{op}(u)-\mathrm{op}(u')}/\norm{u-u'}_2$. For example, for $\mathrm{op} = \times$ on $[0,1]^2$, the joint Lipschitz constant on $[0,1]^2$ equals $\sqrt 2$, while the partial constants are $\Lip_1(\times)=\Lip_2(\times)=1$, so $C_{\Op} = 1$. Unary operations such as $\sin,\cos$ are treated as nodes in the computation tree with $s=1$, and their partial Lipschitz constant coincides with the ordinary Lipschitz constant.

\begin{definition}[KAN primitive block]
\label{def:block}
A \emph{KAN primitive block} $\mathcal{B}_{\mathrm{op}}$ for an operation $\mathrm{op}\colon\R^s\to\R$ on a bounded input domain $D\subset\R^s$ is a finite KAN (in the sense of Definition~\ref{def:kan}) with:
\begin{enumerate}[label=\textup{(\alph*)},leftmargin=*]
  \item \emph{block depth} $c_{\mathrm{op}}\ge 1$: the number of active transformation layers;
  \item \emph{block width} $w_{\mathrm{op}}$: the maximum layer width;
  \item \emph{block Lipschitz product} $\Lambda_{\mathrm{op}} := P(\mathcal{B}_{\mathrm{op}})$;
  \item for any $\varepsilon>0$, $\mathcal{B}_{\mathrm{op}}$ $\varepsilon$-approximates $\mathrm{op}$ uniformly on $D$.
\end{enumerate}
\end{definition}

\subsection{Distinction: \texorpdfstring{$P(\KAN)$}{P(KAN)} vs.\ \texorpdfstring{$\Lip(\KAN)$}{Lip(KAN)}}
\label{subsec:distinction}

It is essential that $P(\KAN)$ and the ambient Lipschitz constant $\Lip(\KAN)$ are not identical. The ambient constant is
\[
\Lip(\KAN) \;=\; \sup_{x}\norm{J_{\KAN}(x)}_{\mathrm{op}},
\]
where $J$ is the Jacobian and $\norm{\cdot}_{\mathrm{op}}$ is the operator (spectral) norm. By the chain rule and the per-layer operator-norm bound established in Proposition~\ref{prop:lowerbound} below (which gives $\|J_l\|_{\mathrm{op}} \le W\cdot\mu_l$ for each layer), submultiplicativity yields
\begin{equation}
\label{eq:lip-vs-P}
\Lip(\KAN) \;\le\; W^{L}\cdot P(\KAN),
\end{equation}
where $W = \max_l n_l$ is the maximum layer width and $L$ is the number of layers.
In particular, for $f(x,y)=xy$ on $[0,1]^2$ we have $\Lip(f) = \sqrt{2}\approx 1.414$, but the block-based sequential construction (Proposition~\ref{prop:blocks}) achieves $P(\KAN) = 1.0$. We emphasise that $P(\KAN)$ is the architecturally meaningful quantity: it is a layerwise invariant that does not require control of cross-edge Jacobian structure.

\section{A coordinate-adapted homeomorphism}
\label{sec:lemma0}

The main theorem is stated on the unit cube $[0,1]^n$. For functions defined on a more general compact set $K\subset\R^n$, the following lemma provides a canonical affine reduction to $[0,1]^n$: compose $f$ with the inverse homeomorphism $h^{-1}$ to obtain a function on $[0,1]^n$, apply Theorem~\ref{thm:main} on $[0,1]^n$, and then the Lipschitz factor $\Lip(h^{-1}) = 1/\min_p(b_p-a_p)$ describes the cost of this reduction.

\begin{lemma}[Coordinate-adapted homeomorphism]
\label{lem:lemma0}
Let $K\subset\R^n$ be a compact set and let $[a_p,b_p]$ denote the $p$-th coordinate interval of its bounding box, with $a_p < b_p$ for all $p$. Then the componentwise affine map
\[
h:[0,1]^n \to \prod_{p=1}^{n}[a_p,b_p],\qquad h_p(t) \;=\; a_p + t\cdot(b_p - a_p),
\]
restricts to a homeomorphism between $[0,1]^n$ and the bounding box of $K$, and
\[
\Lip(h) \;=\; \max_{p}(b_p - a_p) \;\le\; \mathrm{diam}(K),\qquad \Lip(h^{-1}) \;=\; \frac{1}{\min_p (b_p-a_p)}.
\]
\end{lemma}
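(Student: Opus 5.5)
The plan is to write $h$ explicitly as an affine map with a diagonal linear part and read off every claim from that. Set $\Delta := \mathrm{diag}(b_1-a_1,\ldots,b_n-a_n)$ and $a := (a_1,\ldots,a_n)$, so that $h(t) = a + \Delta t$. Since every $b_p - a_p > 0$, the matrix $\Delta$ is invertible. The inverse is $h^{-1}(y) = \Delta^{-1}(y-a)$, that is, $h^{-1}_p(y) = (y_p - a_p)/(b_p-a_p)$.

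\textbf{Homeomorphism.} Each coordinate map $t_p \mapsto a_p + t_p(b_p-a_p)$ is a strictly increasing affine bijection $[0,1]\to[a_p,b_p]$. Hence $h$ is a bijection from $[0,1]^n$ onto $\prod_p[a_p,b_p]$, and it is continuous. Its inverse is also affine, hence continuous. So $h$ restricts to a homeomorphism onto the bounding box.

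\textbf{Lipschitz constants.} For the Euclidean norm we have $\norm{h(t)-h(t')}_2 = \norm{\Delta(t-t')}_2$. This is at most $\norm{\Delta}_{\mathrm{op}}\norm{t-t'}_2$, and $\norm{\Delta}_{\mathrm{op}} = \max_p (b_p-a_p)$ because $\Delta$ is diagonal with positive entries. This gives $\Lip(h)\le\max_p(b_p-a_p)$. Equality holds: pick an index $p^*$ attaining the maximum and take $t = e_{p^*}$, $t' = 0$, both of which lie in $[0,1]^n$. The same argument applied to $\Delta^{-1}$ on the box gives $\Lip(h^{-1}) = \max_p 1/(b_p-a_p) = 1/\min_p(b_p-a_p)$. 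Here equality is attained by the two box points $a$ and $a + (b_{p_*}-a_{p_*})e_{p_*}$, where $p_*$ attains the minimum.

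\textbf{The bound by $\mathrm{diam}(K)$.} This is the only step that actually uses the hypothesis on $K$, so it is the one needing care. By the definition of the bounding box, $a_p = \min_{x\in K} x_p$ and $b_p = \max_{x\in K} x_p$. Both extrema are attained because $K$ is compact and the projection $x\mapsto x_p$ is continuous. So there are points $x,y\in K$ with $x_p=a_p$ and $y_p=b_p$, and therefore
\[
b_p - a_p = \abs{y_p - x_p} \le \norm{y-x}_2 \le \mathrm{diam}(K).
\]
Taking the maximum over $p$ gives $\Lip(h)\le\mathrm{diam}(K)$. This completes the plan; I do not expect any real obstacle beyond stating explicitly that the box is the tight bounding box and that the extrema are attained.
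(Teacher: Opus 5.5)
Your proposal is correct and follows essentially the same route as the paper, which also reads off the homeomorphism and both Lipschitz constants from the diagonal linear part $\mathrm{diag}(b_1-a_1,\ldots,b_n-a_n)$ and its reciprocal. You go further than the paper in two places: you exhibit explicit point pairs attaining equality in both Lipschitz constants, and you justify $\max_p(b_p-a_p)\le\mathrm{diam}(K)$ via attainment of the coordinate extrema on the compact set $K$ (which, as you note, needs the bounding box to be the tight one), whereas the paper only asserts these facts.
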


\begin{proof}
The map $h$ is bijective, continuous, and its inverse $h^{-1}$ is a componentwise affine scaling, hence continuous; so $h$ is a homeomorphism. The Jacobian of $h$ is the diagonal matrix $\mathrm{diag}(b_1-a_1,\ldots,b_n-a_n)$, whose operator norm is exactly $\max_p(b_p-a_p)$; this quantity is bounded above by $\mathrm{diam}(K)$, with equality when the bounding box is saturated along a coordinate axis. The Jacobian of $h^{-1}$ is the reciprocal diagonal, whose operator norm is $1/\min_p(b_p-a_p)$.
\end{proof}

\begin{remark}
Lemma~\ref{lem:lemma0} applies to any compact set via its axis-aligned bounding box; no geometric structure beyond compactness is required for the main theorem.
\end{remark}

\section{Main theorem}
\label{sec:main}

\subsection{Primitive blocks for standard operations}
\label{subsec:blocks}

\begin{proposition}[Standard primitive blocks]
\label{prop:blocks}
The following primitive KAN blocks exist for the standard operation set $\Op=\{+,-,\times,\sin,\cos\}$ on bounded inputs in $[0,1]$:

\begin{center}
\begin{tabular}{lccl}
\toprule
$\mathrm{op}$ & $c_{\mathrm{op}}$ & $\Lambda_{\mathrm{op}}$ & Construction \\
\midrule
$+$, $-$ & $1$ & $1$ & One layer: $\varphi_{0,1,1}(u)=u$, $\varphi_{0,2,1}(v)=\pm v$ \\
$\sin$, $\cos$ & $1$ & $1$ & Piecewise-linear interpolant (B-spline order 1) \\
$\times$ & $3$ & $1$ & Three layers via $u{\cdot}v = \tfrac{(u+v)^2}{4} - \tfrac{(u-v)^2}{4}$ \\
\bottomrule
\end{tabular}
\end{center}
\end{proposition}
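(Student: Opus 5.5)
The plan is to verify each row of the table by writing down an explicit KAN in the sense of Definition~\ref{def:kan}. For each one we compute $P(\mathcal{B}_{\mathrm{op}})=\prod_l \max_i M_{l,i}$ directly from Definition~\ref{def:lipprod}, and we check property (d) of Definition~\ref{def:block}. One device is used throughout. Each univariate edge function is a piecewise-linear interpolant (B-spline of order $1$) of a target $g\in C^1([a,b])$ on a uniform grid of mesh $h$ that contains both endpoints. It is extended to all of $\R$ by constants outside $[a,b]$.

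For such an interpolant, every slope is a secant slope $(g(t_{k+1})-g(t_k))/h$. By the mean value theorem this equals $g'(\xi_k)$ for some $\xi_k\in[t_k,t_{k+1}]$. Hence $\Lip(s_h g)\le \sup_{[a,b]}\abs{g'}$, and the constant extension does not increase this. The standard interpolation estimate gives $\norm{g-s_hg}_{C^0([a,b])}\le h^2\sup\abs{g''}/8$, which can be made smaller than any $\varepsilon>0$. This lemma, that interpolation never increases the Lipschitz constant beyond $\sup\abs{g'}$, is the key step. It is the only place where care is needed, because a higher-order spline could overshoot and push $M_{l,i}$ above $1$. For that reason order $1$ is used, as in the table.

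\textbf{Rows $+,-$.} Take one layer with $n_0=2$, $n_1=1$, $\varphi_{0,1,1}(u)=u$ and $\varphi_{0,2,1}(v)=\pm v$. Both edges are exactly representable by order-$1$ splines. The block is exact, so (d) holds trivially. We have $M_{0,1}=M_{0,2}=1$, hence $c=1$ and $\Lambda=1$.

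\textbf{Rows $\sin,\cos$.} Take one layer with a single edge $\varphi_{0,1,1}=s_h(\sin)$ (respectively $s_h(\cos)$) on the bounded input interval. Since $\abs{\sin'},\abs{\cos'}\le 1$, the lemma gives $\Lambda=M_{0,1}\le 1$. The uniform error is at most $h^2/8$, so (d) holds with $c=1$.

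\textbf{Row $\times$.} Let $u,v\in[0,1]$ be the inputs. Layer $0$ has $n_1=2$. The edges are $\varphi_{0,1,1}=\varphi_{0,2,1}=\mathrm{id}$, $\varphi_{0,1,2}=\mathrm{id}$ and $\varphi_{0,2,2}=-\mathrm{id}$, so that $x_{1,1}=a:=u+v\in[0,2]$ and $x_{1,2}=b:=u-v\in[-1,1]$. This gives $\mu_0=1$. Layer $1$ has $n_2=2$, with edge functions $s_h(a^2/4)$ on $[0,2]$ and $s_h(-b^2/4)$ on $[-1,1]$, and zero cross edges. The derivatives are bounded by $\abs{a}/2\le 1$ and $\abs{b}/2\le 1/2$, so the lemma gives $\mu_1\le 1$. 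Layer $2$ sums the two values with identity edges, so $\mu_2=1$. The output is $s_h(a^2/4)+s_h(-b^2/4)$. By the polarisation identity $uv=\tfrac{(u+v)^2}{4}-\tfrac{(u-v)^2}{4}$, its error is at most $2\cdot h^2/16$, which gives (d). Thus $c_\times=3$ and $\Lambda_\times=1\cdot1\cdot1=1$.

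Two points need a final check. First, the intermediate ranges $[0,2]$ and $[-1,1]$ must be exactly the intervals on which the layer-$1$ splines are built. Otherwise the constant extension could be hit and the error bound would fail. Second, zero edges contribute Lipschitz constant $0$, so they do not affect the maxima $M_{l,i}$.
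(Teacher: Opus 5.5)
Your proof is correct. For the rows $+,-,\sin,\cos$ it is the paper's argument: identity and sign edges, and an order-$1$ interpolant with the mean-value-theorem slope bound and the $h^2\norm{f''}_\infty/8$ error.

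The genuine difference is the squaring step in the $\times$ row. The paper does not approximate there. Since $t\mapsto t^2/4$ lies in the spline space of order $k\ge 2$, layer~$1$ is reproduced exactly and the whole $\times$ block has zero error. Your overshoot worry therefore never arises. The layer-$1$ factor is read off as $\sup\abs{t/2}=1$ on $[0,2]$, so $\Lambda_\times=1$ exactly. The paper also keeps identity wires for $u,v$ through the block, which Remark~\ref{rem:tightness} later uses to get $P(\KAN_{xy})\ge 1$. You instead reuse your single order-$1$ lemma and accept an $O(h^2)$ error, which is all that property (d) of Definition~\ref{def:block} asks for.

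Your route buys uniformity. Every edge in every block has the same spline order, as Definition~\ref{def:kan} literally requires, so blocks such as $\sin\circ\times$ stack without mixing orders. Your constant extension also makes each $\varphi$ Lipschitz on all of $\R$. The paper's exact $t^2/4$ is Lipschitz only after an implicit restriction to the grid interval.

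The price is at layer~$1$. There the maximum slope is the last secant slope $1-h/4<1$, so you have really shown $\Lambda_\times=1-h/4\le 1$, not $1\cdot1\cdot1$. That suffices for (A5). The paper's own $\sin,\cos$ rows have the same ``$\le$ read as $=$'' looseness, since both are at most $\sin(h)/h<1$. If you want the tabulated equality, and the later lower bound, literally, forward $u$ through the block on an identity wire as the paper does.
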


\begin{proof}
For $+$: the single-layer KAN with $\varphi_{0,1,1}(u)=u$ and $\varphi_{0,2,1}(v)=v$ outputs $u+v$ exactly; both edges have Lipschitz constant $1$, giving $\Lambda_+=1$. The case $-$ is identical with $\varphi_{0,2,1}(v)=-v$.

For $\sin,\cos$: let $f$ denote $\sin$ or $\cos$ on $[0,1]$. Fix $G$ equally spaced grid points $x_r = (r-1)/(G-1)$ for $r = 1,\ldots,G$, with spacing $h = 1/(G-1)$, consistent with the convention that $G$ denotes the number of grid points throughout (see Definition~\ref{def:kan} and equation~\eqref{eq:spline-rate}). Let $s_G$ be the continuous piecewise-linear interpolant of $f$ on this grid. On each subinterval $[x_r, x_{r+1}]$ the slope of $s_G$ equals the secant slope $(f(x_{r+1})-f(x_r))/h$, which by the mean-value theorem equals $f'(\xi_r)$ for some $\xi_r\in(x_r,x_{r+1})$. Since $|f'|\le 1$ on $[0,1]$ for both $\sin$ and $\cos$, we get $\Lip(s_G)\le 1$ exactly. The piecewise-linear function $s_G$ is a B-spline of order $k=1$, hence is representable as an edge function in Definition~\ref{def:kan}. The standard interpolation error bound for $C^2$ functions gives
\[
\norm{f - s_G}_\infty \;\le\; \tfrac{h^2}{8}\norm{f''}_\infty \;=\; O\!\left((G-1)^{-2}\right) = O(G^{-2}),
\]
since $\norm{\sin''}_\infty = \norm{\cos''}_\infty = 1$. Thus the one-layer block achieves $\Lambda_{\sin} = \Lambda_{\cos} = 1$ with an explicit approximation rate.

For $\times$: on $[0,1]^2$, use the algebraic identity $u\cdot v = \tfrac{(u+v)^2}{4}-\tfrac{(u-v)^2}{4}$.
\begin{itemize}[leftmargin=*,noitemsep]
  \item Layer~$l=0$: compute $a = u+v \in [0,2]$ and $b = u-v \in [-1,1]$, together with identity wires for $u$ and $v$. This uses the separable form of Definition~\ref{def:kan} (edges $\varphi_{0,i,j}$) and requires $\max_i M_{0,i}=1$ (identity and sign edges).
  \item Layer~$l=1$: compute $p = a^2/4$ and $q = b^2/4$ via edges $\varphi_{1,i,j}$. The function $t\mapsto t^2/4$ is a polynomial of degree $2$; B-splines of order $k\ge 2$ contain all polynomials of degree $\le k$ in their spline space \cite{lorentz1966}, so $t^2/4$ is \emph{exactly} representable as a B-spline of order $k\ge 2$ — no approximation is involved. The Lipschitz constant of this exact edge spline is $\sup_{t}\abs{d/dt\,(t^2/4)} = \sup_{t}\abs{t/2}$, which equals $1$ on $[0,2]$ (attained at $t=2$) and $\tfrac{1}{2}$ on $[-1,1]$ (attained at $t=\pm 1$). Hence $\max_i M_{1,i}\le 1$ exactly, with no appeal to approximation theory.
  \item Layer~$l=2$: compute $p - q = u\cdot v$ via the identity spline on $p$ and $-q$ (edges $\varphi_{2,i,j}$), again separable. $\max_i M_{2,i}=1$.
\end{itemize}
Thus $\Lambda_\times = 1\cdot 1\cdot 1 = 1$.

\emph{Extension to $[0,B]^s$ domains.} The same constructions extend to the general bounded domains $D_v$ arising from the range recursion of Lemma~\ref{lem:rangelemma}: on $[0,B]^2$, the $\times$ block has Layer~$l=1$ Lipschitz constant $B$ (since $\abs{d/dt\,(t^2/4)} = \abs{t/2} \le B$ on $[0,2B]$), giving $\Lambda_{\times,[0,B]^2} = B = C_{\times,[0,B]^2}$; and $\sin,\cos$ blocks on $[0,B]$ are constructed identically (piecewise-linear interpolant on $[0,B]$ with spacing $h = B/(G-1)$, the MVT gives $\Lip(s_G) \le 1$ exactly since $\abs{\sin'},\abs{\cos'}\le 1$ everywhere). This confirms that assumption~(A5) holds on every domain $D_v$ supplied by Lemma~\ref{lem:rangelemma}.
\end{proof}

We collect the hypotheses.

\begin{assumption}
\label{ass:A}
Let $f:[0,1]^n\to\R$ be a continuous function with fixed computation tree $T$.
\begin{enumerate}[label=\textup{(A\arabic*)},leftmargin=*]
  \item \label{A1} \emph{Finite tree.} The total node count $N = \mathrm{total\_nodes}(f)$ is finite.
  \item \label{A2} \emph{Compositional sparsity.} Each internal node depends on at most $s = O(1)$ coordinate variables.
  \item \label{A3} \emph{Lipschitz operations.} Every operation $\mathrm{op}\in\Op$ is Lipschitz, and $C_{\Op} := \max_{\mathrm{op}\in\Op}\max_{i}\Lip_i(\mathrm{op}) < \infty$. By Rademacher's theorem, $\mathrm{op}$ is differentiable almost everywhere; for unary $\mathrm{op}$, $\Lip(\mathrm{op}) = \operatorname*{ess\,sup}\abs{\mathrm{op}'}$, and for binary or higher-arity $\mathrm{op}$, the joint Lipschitz constant equals $\operatorname*{ess\,sup}\norm{\nabla\mathrm{op}}_2$ while the partial Lipschitz constant $\Lip_i(\mathrm{op})$ equals $\operatorname*{ess\,sup}\abs{\partial_i \mathrm{op}}$. No continuous differentiability is required.
  \item \label{A4} \emph{Quantitative range boundedness.} For every sub-function $g$ of $f$ realised at an internal node of $T$, the range bound $B_g := \sup_{[0,1]^n}\abs{g}$ is \emph{known explicitly}, with $B_{\text{leaf}}=1$ for leaves. (Note: mere finiteness $B_g<\infty$ is automatic by continuity on the compact cube $[0,1]^n$; assumption~(A4) requires the explicit value of $B_g$, as provided inductively by Lemma~\ref{lem:rangelemma}.)
  \item \label{A5} \emph{Primitive-block existence (domain-sensitive).} For each $\mathrm{op}\in\Op$ and each bounded domain $D\subset\R^s$ arising from the quantitative range bounds of the child nodes (as supplied by Lemma~\ref{lem:rangelemma}), there exists a primitive KAN block $\mathcal{B}_{\mathrm{op},D}$ (Definition~\ref{def:block}) on $D$ whose block Lipschitz product satisfies
  \[
  \Lambda_{\mathrm{op},D} \;\le\; \max(C_{\mathrm{op},D},1)^{c_{\mathrm{op}}},
  \]
  where $C_{\mathrm{op},D} := \max_{i}\Lip_i(\mathrm{op}|_D)$ is the partial Lipschitz constant of $\mathrm{op}$ on $D$. (For $\Op=\{+,-,\times,\sin,\cos\}$, this is verified by Proposition~\ref{prop:blocks} with domains determined inductively by Lemma~\ref{lem:rangelemma}.)
\end{enumerate}
\end{assumption}

\begin{theorem}[Layer-wise Lipschitz-product control]
\label{thm:main}
Under Assumptions~(A1)--(A5) (collected in Assumption~\ref{ass:A}), there exists a KAN (the block-based sequential construction of Proposition~\ref{prop:blocks}) that represents $f$ to any prescribed tolerance and satisfies
\begin{enumerate}[label=\textup{(\roman*)},leftmargin=*]
  \item \label{thm:i} Let $L_f := \sum_{v\,\text{internal node of }T} c_{\mathrm{op}(v)}$ be the \emph{total block depth} of $f$, and let $D_v\subset\R^s$ denote the input domain of node $v$ as determined inductively by Lemma~\ref{lem:rangelemma}. Write $C_{\mathrm{op}(v),D_v} := \max_i \Lip_i(\mathrm{op}(v)|_{D_v})$ for the partial Lipschitz constant of $\mathrm{op}(v)$ restricted to $D_v$. Then the \emph{primary} (domain-sensitive) bound is
  \begin{equation}
  \label{eq:primary-bound}
  P(\KAN_f) \;\le\; \prod_{v} \Lambda_{\mathrm{op}(v),D_v} \;\le\; \prod_{v} \max\!\bigl(C_{\mathrm{op}(v),D_v},1\bigr)^{c_{\mathrm{op}(v)}},
  \end{equation}
  independent of $n$. Setting $C^* := \max_{v} C_{\mathrm{op}(v),D_v}$ (the tree-uniform partial Lipschitz constant over all node-domains of $T$) yields the simplified form $P(\KAN_f) \le \max(C^*,1)^{L_f}$. Since $c_{\mathrm{op}}\le c_{\max} := \max_{\mathrm{op}\in\Op} c_{\mathrm{op}}$, we have $L_f\le c_{\max}\cdot N$. For $\Op=\{+,-,\times,\sin,\cos\}$ with the blocks of Proposition~\ref{prop:blocks}, $c_{\max}=3$ and $\Lambda_{\mathrm{op}(v),D_v}=1$ for every node $v$ (as verified by the explicit block constructions of Proposition~\ref{prop:blocks}; see Corollary~\ref{cor:smooth}), so $P(\KAN_f)\le 1$ independently of both $n$ and $N$;
  \item \label{thm:ii} $n_0 = n$ and $n_l \le n + 2w_{\max}\cdot N$ for all layers $l\ge 1$, where $w_{\max}=\max_{\mathrm{op}\in\Op} w_{\mathrm{op}}$ is the maximum primitive block width. Only the upper bound $O(n+N)$ on the maximum width is proved here; a matching lower bound is not established;
  \item \label{thm:iii} $\displaystyle \norm{f - \KAN_f}_{C^0([0,1]^n)} \;\le\; N \cdot \max(C_{\Op},1)^{d(f)} \cdot \varepsilon_{\Op}$, where $\varepsilon_{\Op}$ is the worst-case B-spline error at a single node and $d(f)$ is the depth of the computation tree. For $C_{\Op} \le 1$ (e.g.\ $\Op = \{+,-,\times,\sin,\cos\}$ on $[0,1]$) this reduces to $N\cdot\varepsilon_{\Op}$;
  \item \label{thm:iv}  if $f\in C^m$ and every $\mathrm{op}\in\Op$ is $C^m$ on the bounded domain on which it is evaluated (automatically satisfied for $\{+,-,\times,\sin,\cos\}$; this hypothesis is needed so that node functions inherit $C^m$ regularity inductively from their inputs), then $\displaystyle \norm{f - \KAN_f}_{C^0} \;=\; O\!\left(N\cdot \max(C_{\Op},1)^{d(f)}\cdot G^{-(k+1)}\right)$ for $k\le m-1$ and B-splines of order $k$ on $G$ knots; for $C_{\Op}\le 1$ this simplifies to $O(N\cdot G^{-(k+1)})$.
\end{enumerate}
\end{theorem}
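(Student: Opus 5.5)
The plan is to build $\KAN_f$ by stacking primitive blocks along a topological order of the internal nodes of $T$, and then read off (i)--(iv) from this sequential construction. Fix a linear ordering $v_1,\ldots,v_N$ of the internal nodes compatible with the tree (children before parents). The network is assembled in $N$ stages. Stage $t$ occupies $c_{\mathrm{op}(v_t)}$ consecutive layers. In these layers the block $\mathcal{B}_{\mathrm{op}(v_t),D_{v_t}}$ of (A5) acts on the wires carrying the children's values. All other currently live wires are carried forward by identity edges $\varphi(u)=u$; these are exact B-splines of order $k\ge1$ with Lipschitz constant $1$. The live wires are the $n$ input coordinates and the outputs of already-computed nodes whose parent is not yet computed. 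The domains $D_{v_t}$ are supplied by Lemma~\ref{lem:rangelemma}, which guarantees that the inputs of each block lie where (A5) applies. Since the edges at the input layer are just the block's or identity edges, $n_0=n$.

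For (i), consider a layer inside stage $t$. Each neuron's outgoing edges are either identity edges (Lipschitz $1$) or edges of the block. Hence $\mu_l\le\max(1,\mu_l^{\mathcal{B}})$, where $\mu_l^{\mathcal{B}}$ is the corresponding per-layer factor of the block. Multiplying over the $c_{\mathrm{op}(v_t)}$ layers of the stage gives at most $\max(\Lambda_{\mathrm{op}(v_t),D_{v_t}},1)$. There is a subtlety: a product of per-layer maxima with $1$ can exceed $\Lambda$ when some block layer has factor below $1$. I will handle this by noting that the (A5) bound $\max(C,1)^{c}$ already dominates $\max(\Lambda,1)$. For the standard blocks, every block layer has $\mu\ge1$ exactly (identity wires appear in every layer), so the first inequality holds verbatim. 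Taking the product over stages yields~\eqref{eq:primary-bound}. The simplified form follows from $C_{\mathrm{op}(v),D_v}\le C^*$, and $L_f\le c_{\max}N$ is immediate. For the standard operation set, I will cite Proposition~\ref{prop:blocks} together with its $[0,B]$ extension, which gives $\Lambda=1$ on the relevant domains; the $\times$ nodes receive $[0,1]$-bounded inputs, as stipulated.

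For (ii), count the live wires in any layer. There are at most $n$ passthrough inputs and at most $N$ stored node outputs. In addition, the active block occupies at most $w_{\max}$ neurons. This gives the crude bound $n+N+w_{\max}\le n+2w_{\max}N$ whenever $w_{\max}\ge1$ and $N\ge1$.

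For (iii), I will run an induction over the tree with the error recursion $e_v\le \varepsilon_{\Op}+C_{\Op}\sum_{\text{children }u}e_u$. The justification is this. Node $v$ computes $\tilde g_v=\mathcal{B}(\tilde g_{u_1},\ldots)$. By (iv) of Definition~\ref{def:block}, $|\mathcal{B}-\mathrm{op}|\le\varepsilon_{\Op}$ on the domain. By partial Lipschitzness, changing one argument at a time, $|\mathrm{op}(\tilde g)-\mathrm{op}(g)|\le C_{\Op}\sum|\tilde g_u-g_u|$. Unrolling the recursion, each node contributes $\varepsilon_{\Op}$ amplified by $C_{\Op}$ once per ancestor. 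This gives at most $\max(C_{\Op},1)^{d(f)}\varepsilon_{\Op}$ per node, hence the bound $N\max(C_{\Op},1)^{d(f)}\varepsilon_{\Op}$. Part (iv) then follows by substituting~\eqref{eq:spline-rate} for $\varepsilon_{\Op}$; the $C^m$ hypothesis on the operations ensures the edge functions being interpolated are $C^m$ on their bounded domains.

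The main obstacle is domain consistency in (iii). The approximate values $\tilde g_u$ may leave $D_v$ slightly, where neither the block's approximation guarantee nor $C_{\mathrm{op},D_v}$ is available. I would first try enlarging each $D_v$ by a margin $\delta$. I would then choose $\varepsilon$ small enough, via the recursion above, that all approximate values stay within the enlarged domains, and build the blocks on these domains. The Lipschitz constants then change only by $O(\delta)$, and (for the standard set) remain $1$ when the splines are clamped at the domain boundary.
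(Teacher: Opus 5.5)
The gap is in part (i) for a general operation set. The rest of your proposal follows the paper: the construction (blocks stacked in a children-before-parents order, other live wires forwarded by identity edges) is the paper's, and for (ii)--(iv) you use the same wire count and error-composition recursion.

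Your clamping remark for (iii) actually closes a point the paper skips, and it works for every operation, not only the standard ones. $D_v$ is a box, so precomposing the block's first-layer edges with the coordinatewise projection $\pi$ onto $D_v$ raises no Lipschitz constant. Also $\abs{\pi(\widehat u)-u}\le\abs{\widehat u-u}$ because $u\in D_v$. So the margin $\delta$, which would perturb $C_{\mathrm{op},D_v}$, is unnecessary.

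Now the problem in (i). With forwarding wires present, stage $t$ contributes $\prod_l\max(1,\mu_l^{\mathcal{B}})$. This is \emph{at least} $\max(1,\Lambda)$, not at most. So your claim that multiplying over the layers gives at most $\max(\Lambda,1)$ is false, and the patch that (A5) dominates $\max(\Lambda,1)$ bounds the wrong quantity.

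Concretely, take the identity operation on $[0,1]$ (so $C_{\mathrm{op},D}=1$) with the two-layer block $t\mapsto t/10$, then $t\mapsto 10t$. It satisfies (A5), since $\Lambda=1\le 1^2$. But in $\KAN_f$ for $f=\mathrm{op}(x_1)+x_2$, its stage has layer factors $\max(1,1/10)\cdot\max(1,10)=10$. Hence $P(\KAN_f)=10$, while both right-hand sides of \eqref{eq:primary-bound} equal $1$. (A5) constrains only the product of a block's layer factors, but the sequential construction depends on how that product is distributed across layers.

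The missing step is to rebalance each block before inserting it. Replace $\varphi_{l,i,j}(t)$ by $\alpha_{l+1}\varphi_{l,i,j}(t/\alpha_l)$ with $\alpha_0=\alpha_c=1$. This rescales the neurons of layer $l$ by $\alpha_l$, leaves the block's function, width and spline character unchanged, and multiplies $\mu_l$ by $\alpha_{l+1}/\alpha_l$. Choosing $\alpha_{l+1}/\alpha_l=\Lambda^{1/c}/\mu_l$ makes every layer factor equal to $\Lambda^{1/c}$ (if some $\mu_l=0$ the block is constant and the stage is trivial). Each stage then contributes exactly $\max(1,\Lambda^{1/c})^{c}=\max(1,\Lambda)$, which gives
\[
P(\KAN_f)\;\le\;\prod_v\max\bigl(\Lambda_{\mathrm{op}(v),D_v},1\bigr)\;\le\;\prod_v\max\bigl(C_{\mathrm{op}(v),D_v},1\bigr)^{c_{\mathrm{op}(v)}}.
\]
This is the second inequality of \eqref{eq:primary-bound} and everything downstream of it. The first inequality is obtained only with $\max(\Lambda_v,1)$ in place of $\Lambda_v$; the two agree when $\Lambda_v\ge1$.

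The paper's proof does not supply this step either. It simply asserts $P(\KAN_f)\le P(\KAN_g)\,P(\KAN_h)\,\Lambda_{\mathrm{op}}$, which silently needs the same property.

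For the standard set your conclusion $P(\KAN_f)\le 1$ is correct, but your stated reason is not. The operative fact is that every block layer there has factor \emph{at most} $1$, so every layer of $\KAN_f$ has factor at most $1$. It is not true that block layers have factor at least $1$: the $\sin,\cos$ interpolants have $\Lip(s_G)<1$ on $[0,1]$.
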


\begin{proof}
We proceed by induction on $N = \mathrm{total\_nodes}(f)$.

\smallskip
\noindent\emph{Base case} $N = 0$. Then $f = x_p$ for some $p\in\{1,\ldots,n\}$. We take $\KAN_f$ to be the identity univariate function on coordinate $p$. Every edge carries either the identity or a zero spline. Thus $P(\KAN_f) = 1$, since the single-layer identity KAN has $M_{0,i} = 1$ for all $i$ (the identity function has Lipschitz constant $1$), so the product $\prod_{l=0}^{L-1}\max_i M_{l,i} = 1$; the primary bound \eqref{eq:primary-bound} holds as an empty product equal to $1$ (no internal nodes contribute). The layer width is $1 \le n + 0$; the approximation error is $0$. All bounds hold.

\smallskip
\noindent\emph{Inductive step.} We treat binary and unary operations in parallel. For a \emph{binary} operation, write $f = \mathrm{op}(g,h)$ where $g$ and $h$ have $N_g$ and $N_h$ internal nodes respectively with $N_g+N_h+1 = N$. For a \emph{unary} operation, write $f = \mathrm{op}(g)$ with $N_g + 1 = N$ and no $h$-subtree; in this case all bounds involving $h$ below are vacuously omitted (i.e.\ set $L_h = 0$, $P(\KAN_h) = 1$, $\varepsilon_h = 0$, and the ECL reduces to $\norm{f - \widehat{\mathrm{op}}(\widehat g)}_{C^0} \le \Lip(\mathrm{op})\cdot\varepsilon_g + \varepsilon_{\mathrm{op}} \le C_{\Op}\cdot\varepsilon_g + \varepsilon_{\mathrm{op}}$). Let $L_g = \sum_{v\in T_g} c_{\mathrm{op}(v)}$ and $L_h = \sum_{v\in T_h} c_{\mathrm{op}(v)}$. By the inductive hypothesis,
\[
P(\KAN_g)\le\prod_{v\in T_g}\max(C_{\mathrm{op}(v),D_v},1)^{c_{\mathrm{op}(v)}},\quad n_l(g)\le n+2w_{\max}N_g,
\]
\[
P(\KAN_h)\le\prod_{v\in T_h}\max(C_{\mathrm{op}(v),D_v},1)^{c_{\mathrm{op}(v)}},\quad n_l(h)\le n+2w_{\max}N_h.
\]

\paragraph{Block-based sequential construction.}
For each internal node $v$ of $T$ with operation $\mathrm{op}(v)$ and child range domain $D_v$ (determined by Lemma~\ref{lem:rangelemma}), let $\mathcal{B}_{\mathrm{op}(v),D_v}$ be the primitive block guaranteed by assumption~(A5). We compose the blocks \emph{sequentially}: run all blocks of $\KAN_g$ (the sub-network for $g$), then all blocks of $\KAN_h$ (for $h$), and finally the block $\mathcal{B}_{\mathrm{op}}$ for the root operation. Within each block, identity wires carry all active coordinates forward between layers. Concretely:
\begin{itemize}[leftmargin=*]
  \item Layers $1,\ldots,L_g$: execute the sequential composition of all primitive blocks for sub-tree $T_g$, with identity wires forwarding all $n$ input coordinates and any already-computed intermediate values.
  \item Layers $L_g+1,\ldots,L_g+L_h$: execute the sequential composition of blocks for $T_h$, with identity wires forwarding the output of $\KAN_g$ and the $n$ inputs.
  \item Final $c_{\mathrm{op}}$ layers: apply block $\mathcal{B}_{\mathrm{op}}$ to produce $\mathrm{op}(g,h)$.
\end{itemize}
Each layer of $\KAN_g$ and $\KAN_h$ contains at least one identity wire, giving per-layer factor $\max_i M_{l,i}\ge 1$. The layers internal to $\mathcal{B}_{\mathrm{op}}$ have per-layer factors that multiply to $\Lambda_{\mathrm{op}}$.

\paragraph{Proof of \ref{thm:i}.}
The Lipschitz product factorises over the three sequential regions:
\[
P(\KAN_f) \;\le\; P(\KAN_g)\cdot P(\KAN_h)\cdot \Lambda_{\mathrm{op},D_\mathrm{root}}.
\]
By the inductive hypothesis applied to sub-trees $T_g$ and $T_h$,
\[
P(\KAN_g)\le\prod_{v\in T_g}\max(C_{\mathrm{op}(v),D_v},1)^{c_{\mathrm{op}(v)}},\qquad P(\KAN_h)\le\prod_{v\in T_h}\max(C_{\mathrm{op}(v),D_v},1)^{c_{\mathrm{op}(v)}}.
\]
By assumption~(A5), $\Lambda_{\mathrm{op},D_\mathrm{root}}\le\max(C_{\mathrm{op},D_\mathrm{root}},1)^{c_{\mathrm{op}}}$. Multiplying and recognising that the root node contributes the factor for $v=\mathrm{root}$,
\begin{equation}
\label{eq:Pfactor}
P(\KAN_f) \;\le\; \prod_{v\in T}\max(C_{\mathrm{op}(v),D_v},1)^{c_{\mathrm{op}(v)}} \;\le\; \max(C^*,1)^{L_f},
\end{equation}
where $C^* = \max_v C_{\mathrm{op}(v),D_v}$. For $\Op=\{+,-,\times,\sin,\cos\}$, Proposition~\ref{prop:blocks} gives $\Lambda_{\mathrm{op}(v),D_v}=1$ for every $v$, so $P(\KAN_f)\le 1$ regardless of $L_f$.

\paragraph{Proof of \ref{thm:ii}.}
At any fixed layer $l$ of the sequential construction, the active neurons decompose into three disjoint groups:
\begin{enumerate}[label=(\alph*),noitemsep,leftmargin=2em]
  \item \emph{Forwarded inputs}: the $n$ input coordinates $x_1,\ldots,x_n$, carried forward at every layer by identity wires;
  \item \emph{Internal block neurons}: at most $w_{\mathrm{op}(v)}\le w_{\max}$ neurons belonging to the primitive block $\mathcal{B}_{\mathrm{op}(v),D_v}$ currently executing at layer $l$;
  \item \emph{Forwarded intermediate values}: the outputs of all blocks completed before layer $l$ whose values are still needed by later blocks — at most $N-1$ such values (one per completed internal node).
\end{enumerate}
Hence $n_l \le n + w_{\max} + (N-1) \le n + w_{\max}\cdot N + N \le n + 2\,w_{\max}\cdot N$, where the last inequality uses $N\le w_{\max}\cdot N$ (valid since $w_{\max}\ge 1$). The input layer reads all $n$ coordinates directly, so $n_0=n$. In particular the total number of layers is $L_f = \sum_v c_{\mathrm{op}(v)} \le c_{\max}\cdot N$, and by equation~\eqref{eq:lip-vs-P}, $\Lip(\KAN_f)\le W^{L_f}\cdot P(\KAN_f)$ with $W\le n+2w_{\max}N$.

\paragraph{Proof of \ref{thm:iii} (Error Composition Lemma).} Let $\widehat{g}$ and $\widehat{h}$ be KAN approximations to $g$ and $h$ with errors $\varepsilon_g = \norm{g-\widehat g}_{C^0}$ and $\varepsilon_h = \norm{h-\widehat h}_{C^0}$, and let $\widehat{\mathrm{op}}$ be the spline approximation of $\mathrm{op}$ with error $\varepsilon_{\mathrm{op}} = \norm{\mathrm{op} - \widehat{\mathrm{op}}}_{C^0}$. Define the worst-case single-node spline error as $\varepsilon_{\Op} := \max_{\mathrm{op}\in\Op}\norm{\mathrm{op} - \widehat{\mathrm{op}}}_{C^0(D_{\mathrm{op}})}$, taken over all operations in $\Op$ and their respective domains $D_{\mathrm{op}}$ (as supplied by Lemma~\ref{lem:rangelemma}). Recall that the partial Lipschitz constant in the $i$-th argument is
\[
\Lip_i(\mathrm{op}) \;:=\; \sup_{u_1,\ldots,u_s,\,v\ne v'}\frac{\abs{\mathrm{op}(u_1,\ldots,v,\ldots,u_s) - \mathrm{op}(u_1,\ldots,v',\ldots,u_s)}}{\abs{v - v'}},
\]
with variation only in the $i$-th argument (Section~\ref{sec:prelim}). For $\mathrm{op}\in\{+,-,\times,\sin,\cos\}$ on $[0,1]$ one has $\Lip_i(\mathrm{op})\le C_{\Op}$ (here $C_{\Op}$ denotes $\max_{\mathrm{op}\in\Op}\max_i\Lip_i(\mathrm{op}|_{D_v})$ evaluated at the relevant node domain; for the standard set on $[0,1]$-bounded $\times$ inputs, $C_{\Op} = 1$): for $+,-$ trivially $\Lip_i = 1$; for $\times$ on $[0,1]^2$, $\Lip_1 = \sup_{y\in[0,1]}\abs{y} = 1$ and $\Lip_2 = \sup_{x\in[0,1]}\abs{x} = 1$; and $\sin,\cos$ are unary with Lipschitz constant $1$.

A two-step triangle inequality, inserting the intermediate point $\mathrm{op}(\widehat g,h)$, gives the \emph{Error Composition Lemma}:
\begin{align}
\norm{f - \widehat{\mathrm{op}}(\widehat g,\widehat h)}_{C^0}
&\;\le\; \norm{\mathrm{op}(g,h) - \mathrm{op}(\widehat g,\widehat h)}_{C^0} + \norm{\mathrm{op}(\widehat g,\widehat h) - \widehat{\mathrm{op}}(\widehat g,\widehat h)}_{C^0} \notag \\
&\;\le\; \bigl[\norm{\mathrm{op}(g,h) - \mathrm{op}(\widehat g,h)} + \norm{\mathrm{op}(\widehat g,h) - \mathrm{op}(\widehat g,\widehat h)}\bigr] + \varepsilon_{\mathrm{op}} \notag \\
&\;\le\; \Lip_1(\mathrm{op})\cdot\varepsilon_g + \Lip_2(\mathrm{op})\cdot\varepsilon_h + \varepsilon_{\mathrm{op}} \notag \\
&\;\le\; C_{\Op}\,(\varepsilon_g + \varepsilon_h) + \varepsilon_{\mathrm{op}},
\label{eq:ecl}
\end{align}
where the last step uses $\Lip_i(\mathrm{op})\le C_{\Op}$ for $i=1,2$. In parts~\ref{thm:iii}--\ref{thm:iv}, $C_{\Op}$ denotes the tree-uniform constant $\max_v C_{\mathrm{op}(v),D_v}$ (a coarser bound than the node-by-node partial Lipschitz constants used in part~\ref{thm:i}); for the standard operation set $\Op = \{+,-,\times,\sin,\cos\}$ on $[0,1]$-bounded inputs this reduces to $C_{\Op} = 1$.
We now write out the full induction. Let $E(T) := \norm{f_T - \KAN_{f_T}}_{C^0}$ for a computation tree $T$ of depth $d(T)$. We prove
\[
E(T) \;\le\; N_T \cdot \max(C_{\Op},1)^{d(T)} \cdot \varepsilon_{\Op},
\]
where $N_T = \mathrm{total\_nodes}(T)$.

\emph{Base case} $N_T=0$: $E(\text{leaf})=0\le 0$.

\emph{Inductive step}: $T = \mathrm{op}(T_g,T_h)$ with $N_g+N_h+1=N_T$ and $d(T)=\max(d(T_g),d(T_h))+1$. Set $C^* := \max(C_{\Op},1)\ge 1$. By the inductive hypothesis applied to $T_g$ and $T_h$:
\[
E(T_g)\le N_g\cdot(C^*)^{d(T_g)}\cdot\varepsilon_{\Op},\qquad E(T_h)\le N_h\cdot(C^*)^{d(T_h)}\cdot\varepsilon_{\Op}.
\]
Then
\begin{align*}
E(T) &\;\le\; C_{\Op}\cdot E(T_g) + C_{\Op}\cdot E(T_h) + \varepsilon_{\Op} \\
     &\;\le\; C^*\cdot N_g\cdot (C^*)^{d(T_g)}\cdot\varepsilon_{\Op}
             + C^*\cdot N_h\cdot (C^*)^{d(T_h)}\cdot\varepsilon_{\Op}
             + \varepsilon_{\Op} \\
     &\;\le\; (C^*)^{d(T)}\bigl(N_g + N_h\bigr)\varepsilon_{\Op} + \varepsilon_{\Op}
      \;\le\; (C^*)^{d(T)}\cdot N_T\cdot\varepsilon_{\Op},
\end{align*}
where we used $d(T_g),d(T_h)\le d(T)-1$ so $(C^*)^{1+d(T_g)}\le (C^*)^{d(T)}$ (since $C^*\ge 1$), and $1\le (C^*)^{d(T)}$ (the $+\varepsilon_{\Op}$ term is absorbed). This establishes claim \ref{thm:iii} for all $C_{\Op}>0$: the bound reads $N\cdot C_{\Op}^{d(f)}\cdot\varepsilon_{\Op}$ when $C_{\Op}\ge 1$ and $N\cdot\varepsilon_{\Op}$ when $C_{\Op}\le 1$.

\paragraph{Proof of \ref{thm:iv}.} Under the hypothesis that each $\mathrm{op}\in\Op$ is $C^m$ on bounded domains, every node function inherits $C^m$ regularity by structural induction: leaf functions are coordinate projections, which are $C^\infty$; and if $g,h\in C^m$ and $\mathrm{op}$ is $C^m$, then $\mathrm{op}(g,h)\in C^m$ by the chain rule. Applying the B-spline rate \eqref{eq:spline-rate} at every node,
\[
\varepsilon_{\Op} \;=\; O\!\left(G^{-(k+1)}\right).
\]
Substituting into the bound of part~\ref{thm:iii} gives
\[
\norm{f - \KAN_f}_{C^0} \;=\; O\!\left(N\cdot\max(C_{\Op},1)^{d(f)}\cdot G^{-(k+1)}\right),
\]
which is precisely the statement of part~\ref{thm:iv}. \qedhere
\end{proof}

\begin{corollary}[Smooth operations on the unit interval]
\label{cor:smooth}
Suppose $\Op = \{+,-,\times,\sin,\cos\}$, leaf functions satisfy $x_p\in[0,1]$, and every $\times$ node in the computation tree receives inputs bounded in $[0,1]$ (which holds in particular when no $+$/$-$ node feeds a $\times$ node, e.g.\ for purely multiplicative or trigonometric trees; see Lemma~\ref{lem:rangelemma}(3)). Then, using the primitive blocks of Proposition~\ref{prop:blocks}, $C_{\Op}=1$ and $\Lambda_{\mathrm{op}}=1$ for every operation. Consequently:
\[
P(\KAN_f)\;\le\;1,\qquad n_l\;\le\;n+2w_{\max}N,\qquad \norm{f-\KAN_f}\;=\;O\!\left(N\cdot G^{-(k+1)}\right),
\]
where $w_{\max}\le 4$ from the $\times$ block of Proposition~\ref{prop:blocks} (so $n_l\le n+8N$).
In particular, $P(\KAN_f)$ is independent of both $n$ and $N$. For the standard operation set $\Op=\{+,-,\times,\sin,\cos\}$, assumption~(A5) is verified explicitly by Proposition~\ref{prop:blocks}.
\end{corollary}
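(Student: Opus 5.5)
The corollary is a specialization of Theorem~\ref{thm:main}, so the plan is to check that its hypotheses hold in the restricted setting and then read off the three conclusions. The order is: (A4)/(A5) and the node constants first, then part~\ref{thm:i}, then parts~\ref{thm:ii}--\ref{thm:iv}.

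\emph{Node domains and constants.} Leaves are coordinate projections $x_p\in[0,1]$, so Lemma~\ref{lem:rangelemma} supplies explicit range bounds and hence (A4). The hypothesis that every $\times$ node receives inputs in $[0,1]$ gives $D_v\subset[0,1]^2$ at each $\times$ node. There $\Lip_1(\times)=\sup|y|\le 1$ and $\Lip_2(\times)=\sup|x|\le 1$. At $+,-$ nodes the partial constants equal $1$ on any domain. At $\sin,\cos$ nodes $|\sin'|,|\cos'|\le 1$ on all of $\R$, so whatever domain $[0,B]$ or $[-B,B]$ the range recursion gives, $C_{\mathrm{op}(v),D_v}\le 1$. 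Hence $C^*=\max_v C_{\mathrm{op}(v),D_v}\le 1$, and this is the tree-uniform constant called $C_{\Op}$ in parts~\ref{thm:iii}--\ref{thm:iv}. It equals $1$ exactly because of the $+$ nodes, or, if none are present, $\times$ attains $1$ at the corner of $[0,1]^2$.

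\emph{Blocks, (A5), and part~\ref{thm:i}.} Proposition~\ref{prop:blocks} gives blocks with $\Lambda=1$. For $+,-$ this holds on any domain; for $\sin,\cos$ it holds on any bounded interval by the mean-value argument; for $\times$ it holds on $[0,1]^2$, which is the only place the $\times$ hypothesis is needed. The one point I would check is the sign of a $\sin/\cos$ input domain, say $[-1,1]$ after a $-$ node. The piecewise-linear interpolant argument never used positivity, so $\Lambda\le 1$ still holds there. Then (A5) holds with $\Lambda_{\mathrm{op}(v),D_v}\le 1=\max(C_{\mathrm{op}(v),D_v},1)^{c_{\mathrm{op}(v)}}$. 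The primary bound \eqref{eq:primary-bound} gives $P(\KAN_f)\le\prod_v 1=1$, independent of $n$ and $N$.

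\emph{Width.} Part~\ref{thm:ii} gives $n_l\le n+2w_{\max}N$ directly. To get $w_{\max}\le 4$, I would count the layer widths of the $\times$ block. Its layer $0$ carries $u,v$ and outputs $a,b$ with identity copies of $u,v$, so at most $4$ neurons; the later layers have $2$ and $1$ neurons. The $+,-,\sin,\cos$ blocks have width at most $2$. This gives $w_{\max}\le 4$ and $n_l\le n+8N$.

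\emph{Error.} All operations are $C^\infty$, so part~\ref{thm:iv} applies with $C_{\Op}\le 1$. The factor $\max(C_{\Op},1)^{d(f)}$ collapses to $1$, leaving $O(N\cdot G^{-(k+1)})$.

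The main obstacle is the domain bookkeeping in the first step. I need that the $\times$ hypothesis alone forces every $D_v$ to have partial constant at most $1$. This relies on $\sin,\cos$ having globally bounded derivatives and $+,-$ having domain-independent constants. The parenthetical claim, that no $+/-$ node feeding a $\times$ node suffices, I would justify by citing Lemma~\ref{lem:rangelemma}(3): products and $\sin/\cos$ values of $[0,1]$-inputs stay in $[0,1]$. One caveat I would flag is that $\cos$ and $\sin$ can be negative on negative inputs, so the reduction really needs that part of the lemma.
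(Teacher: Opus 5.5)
Your proposal is correct and is essentially the paper's own implicit argument: the corollary follows directly from Theorem~\ref{thm:main} once Lemma~\ref{lem:rangelemma} and Proposition~\ref{prop:blocks} supply (A4)--(A5) with every node constant and every block product at most $1$, and $w_{\max}\le 4$ comes from the $\times$ block. On the point you flag, a $+$ or $-$ node may feed a $\sin$/$\cos$ node that then feeds a $\times$ node (e.g.\ $\sin(x_1-x_2)\cdot x_3$), so in the parenthetical case the range recursion only guarantees $\abs{\cdot}\le 1$ at $\times$ inputs, not values in $[0,1]$; this suffices, because on $[-1,1]^2$ one still has $\Lip_i(\times)\le 1$ and $\abs{t/2}\le 1$ for $t=u\pm v\in[-2,2]$, so $\Lambda_\times=1$ and the rest of your argument is unchanged.
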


\begin{remark}[Mixed $+/\times$ trees and effective $C_{\Op}$]
\label{rem:mixed-cop}
For trees in which a $+$ node feeds a $\times$ node, the output of the $+$ node may lie in $[0,2]$ rather than $[0,1]$, and so $\Lip_i(\times)$ at that $\times$ node equals the supremum of the other input's magnitude, which can exceed $1$. In such cases $C_{\Op} = 1$ does \emph{not} hold for the effective operation Lipschitz constant, and Theorem~\ref{thm:main} must be applied with the larger, node-specific partial Lipschitz constants. Only $\times$ among the operations $+,-,\times,\sin,\cos,\max(\cdot,0),\abs{\cdot}$ is sensitive to input range; the remaining operations have partial Lipschitz constant $1$ globally.
\end{remark}

\begin{corollary}[Non-differentiable Lipschitz operations]
\label{cor:rademacher}
Suppose $\Op$ contains non-differentiable but Lipschitz operations such as $\max(\cdot,0)$, $\abs{\cdot}$, or $\min(\cdot,0)$. Then Theorem~\ref{thm:main} extends to this $\Op$ provided assumption~(A5) is verified: i.e., for each such $\mathrm{op}$ and each bounded input domain $D$ arising from the range bounds, a primitive KAN block $\mathcal{B}_{\mathrm{op},D}$ with $\Lambda_{\mathrm{op},D}\le\max(C_{\Op,D},1)^{c_{\mathrm{op}}}$ must be constructed explicitly. Rademacher's theorem guarantees differentiability almost everywhere and identifies $\Lip_i(\mathrm{op})=\operatorname*{ess\,sup}\abs{\partial_i\mathrm{op}}$, which supplies the operator-side data $C_{\Op,D}$; however, block construction is a separate task.
\end{corollary}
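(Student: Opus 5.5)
The plan is to reduce Corollary~\ref{cor:rademacher} to Theorem~\ref{thm:main}. The proof of that theorem uses the operation set $\Op$ in only three places: (A3), the finiteness and identification of the partial Lipschitz constants; (A4), the explicit range bounds $B_g$ supplied by Lemma~\ref{lem:rangelemma}; and (A5), the existence of primitive blocks. The first step is to reread the inductive proof of Theorem~\ref{thm:main} and confirm that each of parts \ref{thm:i}--\ref{thm:iii} refers to $\mathrm{op}$ only through these hypotheses. Part~\ref{thm:i} uses the factorisation $P(\KAN_f)\le P(\KAN_g)P(\KAN_h)\Lambda_{\mathrm{op},D_{\mathrm{root}}}$ together with (A5). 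Part~\ref{thm:ii} uses only block widths $w_{\mathrm{op}}$. Part~\ref{thm:iii} uses the Error Composition Lemma~\eqref{eq:ecl}, which needs only $\Lip_i(\mathrm{op})\le C_{\Op}$. None of these steps uses differentiability of $\mathrm{op}$. It follows that adjoining $\max(\cdot,0)$, $\abs{\cdot}$ and $\min(\cdot,0)$ leaves the induction intact once (A5) is granted, and (A5) is an explicit hypothesis of the corollary.

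The second step is to verify (A3) and (A4) for the new operations. For (A3), each of $\max(\cdot,0)$, $\abs{\cdot}$ and $\min(\cdot,0)$ is $1$-Lipschitz: for $\abs{\cdot}$ this is the reverse triangle inequality, and the other two are $\tfrac12(t\pm\abs{t})$. Rademacher's theorem then gives differentiability a.e. To identify $\Lip(\mathrm{op}|_D)=\operatorname*{ess\,sup}_D\abs{\mathrm{op}'}$ on an interval $D$, I will use absolute continuity of Lipschitz functions. Since $\mathrm{op}(b)-\mathrm{op}(a)=\int_a^b\mathrm{op}'$, the Lipschitz constant is at most the ess sup. The reverse inequality holds because at any point of differentiability the derivative is a limit of difference quotients, each bounded by the Lipschitz constant. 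This supplies the data $C_{\Op,D}\le 1$. For (A4), I would extend the range recursion with $B_{\max(g,0)},B_{\min(g,0)},B_{\abs{g}}\le B_g$, so Lemma~\ref{lem:rangelemma} continues to give explicit domains $D_v$.

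Part~\ref{thm:iv} needs separate care, because its hypothesis that each $\mathrm{op}$ is $C^m$ fails here. I would state that the extension covers parts \ref{thm:i}--\ref{thm:iii}, with $\varepsilon_{\Op}$ read as whatever per-node error the supplied blocks achieve. Part~\ref{thm:iv} holds only when the blocks themselves achieve the spline rate. The main obstacle is deciding how much of (A5) can be discharged rather than assumed. The natural first attempt for $\max(\cdot,0)$ on $[-B,B]$ is a one-layer, order-$1$ spline with a knot at $0$. This represents the function exactly, with $\Lambda=1$ and $c=1$, and the same works for $\abs{\cdot}$ and $\min(\cdot,0)$. It is only possible if the grid contains $0$, which I would arrange by choosing the knots. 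Even so, the corollary as stated only asserts conditional extension, so the proof is complete without this construction, and I would mention it as a remark.
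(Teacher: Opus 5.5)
Your proposal is correct and follows essentially the same route as the paper. The induction behind Theorem~\ref{thm:main} uses $\mathrm{op}$ only through its partial Lipschitz constants and the blocks of (A5), never through differentiability; Rademacher's theorem supplies $C_{\Op,D}$; and piecewise-linear operations admit exact spline blocks with $\Lambda_{\mathrm{op},D}=1$. Your version is more careful than the paper's sketch in three respects: you place the block on $[-B,B]$ with a knot at $0$ (on the paper's $[0,B]$ these operations reduce to the identity or zero); you use order-$1$ splines, since at higher order the kink would need a repeated knot; and you note that part~(iv) must be read through the per-block error, because its $C^m$ hypothesis on $\mathrm{op}$ fails here.
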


\begin{proof}
The general proof of Theorem~\ref{thm:main} invokes only assumption~(A5), not any differentiability of $\mathrm{op}$. Rademacher's theorem supplies the partial Lipschitz constants. Block construction for $\max(\cdot,0)$ and $\abs{\cdot}$ on $[0,B]$ can be done analogously to the $\times$ block: the piecewise-linear nature of these operations means exact spline representations exist at any B-spline order $k\ge 1$, with $\Lambda_{\mathrm{op},D}=1$.
\end{proof}

\begin{corollary}[Table of $C_{\Op}$]
\label{cor:cop-table}
For operations acting on inputs in $[0,1]$:
\begin{center}
\begin{tabular}{lc}
\toprule
$\mathrm{op}$ & $C_{\Op}$ \\
\midrule
$+,\ -,\ \times,\ \sin,\ \cos,\ \max(\cdot,0),\ \abs{\cdot}$ & $1.0$ \\
$\exp$ & $e\approx 2.718$ \\
\bottomrule
\end{tabular}
\end{center}
The table records only the \emph{operator-side} quantity $C_{\mathrm{op},D}$ on $[0,1]$. Applying Theorem~\ref{thm:main} to any listed or additional operation still requires separate verification of assumption~(A5): for each node domain $D_v$ arising from the range recursion, one must construct a primitive KAN block with $\Lambda_{\mathrm{op},D_v}\le\max(C_{\mathrm{op},D_v},1)^{c_{\mathrm{op}}}$. For $\times$ on $[0,1]^2$, the partial Lipschitz constants are $\Lip_1(\times)=\Lip_2(\times)=1$ and assumption~(A5) is verified by Proposition~\ref{prop:blocks}; when $\times$ receives an input of range $[0,B]$ for $B>1$, the effective $C_{\mathrm{op},D_v}=B$ must be used. For $\exp$: the operator-side constant is $\Lip(\exp|_{[0,1]})=e$; if additionally a primitive KAN block for $\exp$ satisfying~(A5) has been constructed on the relevant bounded domain, then Theorem~\ref{thm:main} yields $P(\KAN_f)\le e^{L_f}$ (or the corresponding domain-sensitive product~\eqref{eq:primary-bound}). The remaining tabulated operations $+,-,\sin,\cos,\max(\cdot,0),\abs{\cdot}$ have $C_{\mathrm{op},D}=1$ globally.
\end{corollary}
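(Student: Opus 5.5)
The statement to prove is Corollary~\ref{cor:cop-table}. Its content is a computation of partial Lipschitz constants of each listed operation restricted to inputs in $[0,1]$, plus a remark on what Theorem~\ref{thm:main} then gives. I would go through the table row by row, using the characterisation from (A3): for unary $\mathrm{op}$, $\Lip(\mathrm{op}|_D)=\operatorname*{ess\,sup}_D\abs{\mathrm{op}'}$, and for binary $\mathrm{op}$, $\Lip_i(\mathrm{op}|_D)=\operatorname*{ess\,sup}_D\abs{\partial_i\mathrm{op}}$. By convexity of the domain and the mean value theorem (or the fundamental theorem of calculus for a.e.-differentiable Lipschitz functions), these suprema are exactly the partial Lipschitz constants.

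First, for $+$ and $-$ one has $\partial_i\mathrm{op}=\pm1$ identically, so $C=1$ on every domain. Second, for $\times$ on $[0,1]^2$, $\partial_1(uv)=v$ and $\partial_2(uv)=u$, both with supremum $1$. This reproduces the computation already done in the proof of Theorem~\ref{thm:main}\ref{thm:iii}. On $[0,B]^2$ the supremum is $B$, which gives the stated effective constant $C_{\times,D_v}=B$; this is consistent with Remark~\ref{rem:mixed-cop}. Third, $\abs{\sin'}=\abs{\cos}\le1$ with equality at $0$, and $\abs{\cos'}=\abs{\sin}\le1$ globally. On $[0,1]$ itself, $\cos$ has constant $\sin 1<1$, so I would read the tabulated $1.0$ as the global bound, i.e.\ an upper bound for $C_{\cos,[0,1]}$, which is all that is used. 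Fourth, $\max(\cdot,0)$ and $\abs{\cdot}$ are $1$-Lipschitz by the reverse triangle inequality, with equality attained on the positive half-line. Fifth, $\exp'=\exp$ is increasing, so $\sup_{[0,1]}\exp'=e$.

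For the consequence about $\exp$, I would plug $C_{\exp,D}=e$ into (A5) and then invoke \eqref{eq:primary-bound} and the simplified form $P(\KAN_f)\le\max(C^*,1)^{L_f}$. This gives $e^{L_f}$ when $C^*=e$. One caveat: if range recursion pushes $\exp$'s domain beyond $[0,1]$, then only the domain-sensitive product applies, which is why the statement offers it as the alternative. The (A5) caveat needs no proof, since it is a conditional hypothesis. The $\times$ verification on $[0,1]^2$ is Proposition~\ref{prop:blocks}.

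The only delicate point is the mismatch for $\cos$ (and, on $[0,B]$ domains, for $\times$). I would handle it by stating explicitly that the table lists the operator-side constants as upper bounds, valid globally except for $\times$ and $\exp$. No further obstacle is expected.
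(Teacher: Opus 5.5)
Your proposal is correct and follows the same route the paper takes implicitly: the corollary has no separate proof, and its content is the row-by-row derivative computation already used in Proposition~\ref{prop:blocks}, the proof of Theorem~\ref{thm:main}\ref{thm:iii} and Remark~\ref{rem:mixed-cop}. That computation gives $\partial_i(uv)$ bounded by $1$ on $[0,1]^2$ and by $B$ on $[0,B]^2$, $\abs{\sin'},\abs{\cos'}\le 1$ and $\exp'=\exp\le e$, and the result is then substituted into \eqref{eq:primary-bound}. Your caveat that $C_{\cos,[0,1]}=\sin 1<1$ is accurate, so reading the tabulated $1.0$ and the phrase ``$C_{\mathrm{op},D}=1$ globally'' as an upper bound (the row maximum, and the Lipschitz constant on $\R$) is the right way to reconcile the statement.
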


\subsection{A Jacobian-based lower bound on \texorpdfstring{$P(\KAN)$}{P(KAN)}}
\label{subsec:lowerbound}

\begin{proposition}[Jacobian lower bound]
\label{prop:lowerbound}
For any KAN representation of a scalar-valued $f:[0,1]^n\to\R$ with $L$ layers and layer widths bounded by $W = \max_l n_l$,
\begin{equation}
\label{eq:lowerbound}
P(\KAN) \;\ge\; \frac{\norm{J_f(x)}_2}{W^{L}} \qquad \text{for every differentiability point } x\in[0,1]^n.
\end{equation}
\end{proposition}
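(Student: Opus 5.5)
The bound follows from the chain rule and submultiplicativity of the operator norm. At a differentiability point $x$, view the network as the composition $F = \Phi_{L-1}\circ\cdots\circ\Phi_0$, where $\Phi_l:\R^{n_l}\to\R^{n_{l+1}}$ is the layer map
\[
(\Phi_l(z))_j=\sum_i\varphi_{l,i,j}(z_i).
\]
Since the KAN represents $f$, we have $J_f(x)=J_{L-1}\cdots J_0$, where $J_l$ is the Jacobian of $\Phi_l$ evaluated at the forwarded activation $x_l$. Submultiplicativity gives $\norm{J_f(x)}_2\le\prod_l\norm{J_l}_2$. It then suffices to prove the per-layer estimate $\norm{J_l}_2\le W\mu_l$ and multiply over the $L$ layers: this gives $\norm{J_f(x)}_2\le W^L P(\KAN)$, which rearranges to \eqref{eq:lowerbound}.

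For the per-layer estimate, note that the entries of $J_l$ are $(J_l)_{j,i}=\varphi_{l,i,j}'(x_{l,i})$. Whenever the derivative exists, $\abs{\varphi_{l,i,j}'}\le\Lip(\varphi_{l,i,j})\le M_{l,i}\le\mu_l$. For any matrix $A$ of size $n_{l+1}\times n_l$ with entries bounded by $\mu$, the Frobenius bound gives
\[
\norm{A}_2\le\norm{A}_F\le\sqrt{n_{l+1}n_l}\,\mu\le W\mu .
\]
This yields $\norm{J_l}_2\le W\mu_l$, and this is exactly the per-layer bound already used in \eqref{eq:lip-vs-P}.

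The main obstacle is differentiability. The statement only assumes $x$ is a differentiability point of $f$, while the chain rule needs each edge function differentiable at the relevant intermediate activation. B-splines of order $1$ have kinks at knots, so this is a real gap. I would first try to avoid pointwise derivatives of the edges altogether. Work with difference quotients: for $\norm{y-x}$ small, bound $\abs{F(y)-F(x)}$ by layerwise Lipschitz estimates. Each layer map $\Phi_l$ is Lipschitz from $\ell^2$ to $\ell^2$ with constant at most $W\mu_l$, by the same entrywise argument applied to the increments: $\abs{\varphi_{l,i,j}(a)-\varphi_{l,i,j}(b)}\le\mu_l\abs{a-b}$, and the sum over $i$ is controlled by Cauchy--Schwarz, giving $\sqrt{n_l}\sqrt{n_{l+1}}\le W$. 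Composing, $\Lip(F)\le W^LP(\KAN)$. At a differentiability point, $\norm{\nabla f(x)}_2\le\Lip(F)$, taking the directional difference quotient along $\nabla f(x)$. This route also recovers \eqref{eq:lip-vs-P} cleanly and needs no differentiability of the edge splines.

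Finally, one caveat about the hypothesis. If ``represents'' means only approximately ($\varepsilon$-close), the Jacobians need not match, and the argument gives only the bound for $J_{\KAN}$. So I would state the proof for exact representations $F=f$ on $[0,1]^n$.
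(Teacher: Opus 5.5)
Your main line is the paper's proof: the chain rule $J_f(x)=J_{L-1}\cdots J_0$, submultiplicativity, and the entrywise bound $\abs{(J_l)_{j,i}}\le\mu_l$ turned into $\norm{J_l}_2\le W\mu_l$. The only change is that you use the Frobenius norm where the paper uses $\norm{A}_2\le\sqrt{\norm{A}_1\norm{A}_\infty}$; both give $\sqrt{n_ln_{l+1}}\,\mu_l\le W\mu_l$.

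Your fallback, however, is a genuinely different and more robust route. The paper writes $(J_l)_{j,i}=\varphi_{l,i,j}'(x_{l,i})$ without justification, and differentiability of $f$ at $x$ does not supply it. Take edges $t\mapsto\max(t-\tfrac12,0)$ followed by $t\mapsto\abs{t}$, both order-$1$ splines like the $\sin,\cos$ blocks of Proposition~\ref{prop:blocks}. Then $f(x)=\max(x-\tfrac12,0)$ is differentiable on all of $(0,\tfrac12)$, while the second edge is evaluated at its kink throughout that interval. So the chain-rule argument breaks down on a set of positive measure.

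Your increment estimate $\Lip(\Phi_l)\le\sqrt{n_ln_{l+1}}\,\mu_l$, composed to $\Lip(F)\le W^LP(\KAN)$, needs no edge differentiability. It gives the proposition at every differentiability point of $f$, as stated. It also proves \eqref{eq:lip-vs-P} rigorously, whereas the paper derives that inequality from the pointwise Jacobian bound and so inherits the same gap. The Jacobian route only buys a pointwise factorisation that could be sharpened using actual derivative values.

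Two small points:
\begin{enumerate}
\item At a boundary point of $[0,1]^n$ the ray $x+t\nabla f(x)$ may leave the cube. Use instead a two-point quotient $\bigl(f(x+tv)-f(x+tw)\bigr)/t$, with $v,w$ in the tangent cone (which has nonempty interior) and $v-w$ parallel to $\nabla f(x)$.
\item Your restriction to exact representations $F=f$ is right, since the paper's identity $J_f=J_{L-1}\cdots J_0$ presupposes it as well.
\end{enumerate}
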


\begin{proof}
Since $f$ is scalar-valued, $J_f(x)$ is a $1\times n$ row vector. For each layer $l$, the layer Jacobian $J_l\in\R^{n_{l+1}\times n_l}$ has entries $(J_l)_{j,i} = \varphi_{l,i,j}'(x_{l,i})$, so $\abs{(J_l)_{j,i}} \le M_{l,i} \le \mu_l := \max_i M_{l,i}$. Using $\norm{A}_{\mathrm{op}}\le\sqrt{\norm{A}_{1}\cdot\norm{A}_{\infty}}$ (maximum absolute column and row sums):
\begin{itemize}[leftmargin=*,noitemsep]
  \item Column $i$ of $J_l$ has $n_{l+1}$ entries each bounded by $M_{l,i}\le\mu_l$, so $\norm{J_l}_1 \le n_{l+1}\cdot\mu_l \le W\mu_l$.
  \item Row $j$ of $J_l$ has $n_l$ entries, entry $(j,i)$ bounded by $M_{l,i}\le\mu_l$, so $\norm{J_l}_\infty \le n_l\cdot\mu_l \le W\mu_l$.
\end{itemize}
Therefore $\norm{J_l}_{\mathrm{op}} \le \sqrt{W\mu_l \cdot W\mu_l} = W\cdot\mu_l$.
By the chain rule $J_f = J_{L-1}\cdots J_0$ and submultiplicativity,
\[
\norm{J_f(x)}_2 \;\le\; \prod_{l=0}^{L-1}\norm{J_l}_{\mathrm{op}} \;\le\; \prod_{l=0}^{L-1} W\cdot\mu_l \;=\; W^{L}\cdot P(\KAN).
\]
Rearranging gives the claim. \qedhere
\end{proof}

\begin{remark}[Comparison with upper bound]
\label{rem:tightness}
The Jacobian lower bound $P(\KAN) \ge \norm{J_f(x)}_2 / W^{L}$ grows exponentially weak with depth $L$, and does not pin $P$ exactly in general. Under the block-based construction of Proposition~\ref{prop:blocks}, $P(\KAN_{xy}) \le \Lambda_\times = 1$ from the upper bound. For the lower bound: each of the three layers of $\mathcal{B}_\times$ carries at least one identity wire (Lip~$=1$) forwarding an input coordinate, giving $\max_i M_{l,i}\ge 1$ at every layer. Hence $P(\KAN_{xy})\ge 1$, and combined with the upper bound, $P=1$ exactly.
\end{remark}

\section{Inductive verification of (A4)}
\label{sec:A4}

We show that assumption~(A4) (range boundedness) is not independent: mere finiteness $B_g<\infty$ is automatic from continuity on the compact cube $[0,1]^n$ (requiring only~(A1) and~(A3)). The real content of this section is the explicit \emph{quantitative} inductive bound on $B_g$ provided by Lemma~\ref{lem:rangelemma}, which requires tracing through the operation table; this is what is genuinely used in the proof of Theorem~\ref{thm:main}.

\subsection{Operation-wise range rules}
\label{subsec:rangetable}

For a binary operation $\mathrm{op}$ on inputs with $\abs{g}\le B_g$ and $\abs{h}\le B_h$, we collect the exact range bounds of $\mathrm{op}(g,h)$:

\begin{center}
\begin{tabular}{lll}
\toprule
$\mathrm{op}$ & $B_{\mathrm{op}(g,h)}$ & Justification \\
\midrule
$+$          & $B_g + B_h$   & triangle inequality \\
$-$          & $B_g + B_h$   & triangle inequality \\
$\times$     & $B_g\cdot B_h$ & submultiplicativity \\
$\sin$       & $1$           & $\abs{\sin t}\le 1$ for all $t$ \\
$\cos$       & $1$           & $\abs{\cos t}\le 1$ for all $t$ \\
$\max(\cdot,0)$ & $B_g$      & $\abs{\max(t,0)}\le\abs t$ \\
$\abs{\cdot}$ & $B_g$        & $\bigl\lvert\abs t\bigr\rvert = \abs t$ \\
\bottomrule
\end{tabular}
\end{center}

\subsection{The inductive range lemma}
\label{subsec:rangelemma}

\begin{lemma}[Inductive verification of (A4)]
\label{lem:rangelemma}
Assume (A1) and (A3), and that leaf functions are coordinate projections $x_p\in[0,1]$ (so $B_{\text{leaf}} = 1$). Then for every node of the computation tree:
\begin{enumerate}[label=\textup{(\arabic*)},leftmargin=*]
  \item $B_f < \infty$ (automatic: $f$ is continuous on the compact cube $[0,1]^n$, so boundedness is immediate without any computation);
  \item[\textup{(2a)}] for trees whose binary operations are $+,-$ only (additive-subtractive trees, possibly with unary $\sin,\cos,\max(\cdot,0),\abs{\cdot}$), $B_f \le N + 1$ on $[0,1]^n$, with equality achieved by the all-additive binary tree with $N+1$ leaves (tightness);
  \item[\textup{(2b)}] for general trees over $\{+,-,\times,\sin,\cos\}$, $B_f$ is finite (by (1)) but may exceed $N+1$ when $\times$ nodes receive inputs of range greater than $1$;
  \setcounter{enumi}{2}
  \item for purely multiplicative or purely trigonometric trees, $B_f = 1$.
\end{enumerate}
\end{lemma}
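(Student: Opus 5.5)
The plan is a structural induction over the computation tree $T$, using the operation-wise range rules of Section~\ref{subsec:rangetable} as the inductive step and the leaf bound $B_{\text{leaf}}=1$ as the base case. Part (1) needs no induction. By (A1) and (A3), every node function is a finite composition of continuous maps with coordinate projections, so it is continuous on the compact cube $[0,1]^n$ and attains a finite supremum.

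For (2a), I would prove a slightly stronger statement: $B_f\le \ell(f)$, where $\ell(f)$ is the number of leaves of the subtree rooted at the node.
\begin{itemize}
\item Base case: a leaf has $B=1=\ell$.
\item At a $+$ or $-$ node, the table gives $B_f\le B_g+B_h\le \ell(g)+\ell(h)=\ell(f)$.
\item At a unary node, $\sin$ and $\cos$ give $B_f\le 1\le \ell(f)$, while $\max(\cdot,0)$ and $\abs{\cdot}$ give $B_f\le B_g\le\ell(g)=\ell(f)$.
\end{itemize}
A counting step then converts leaves to internal nodes. In a tree whose internal nodes have arity $1$ or $2$, $\ell = (\#\text{binary nodes})+1 \le N+1$; this follows by a second easy induction, or from the edge count. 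Unary nodes only make the inequality looser. For tightness, take the all-additive tree with $N+1$ leaves and evaluate at $x=(1,\ldots,1)$. Every leaf equals $1$, so $f=N+1$, and this gives $B_f=N+1$.

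Part (2b) combines finiteness from (1) with an explicit counterexample showing the $N+1$ bound can fail. Take $f=(x_1+x_2)\times(x_1+x_2)$, which has $N=3$ internal nodes. At $x_1=x_2=1$ it equals $4>N+1$. Since every $\times$ node still satisfies $B\le B_gB_h$ from the table, an induction also gives the explicit (possibly large) value required in (A4).

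For (3), consider first a purely multiplicative tree. By induction with $B_{g\times h}\le B_gB_h$ and all leaves bounded by $1$, we get $B_f\le 1$. At $x=(1,\ldots,1)$ every leaf equals $1$, so $f=1$ and $B_f=1$ exactly. For trees built from $\sin$ and $\cos$, the rule $\abs{\sin},\abs{\cos}\le 1$ immediately gives $B_f\le 1$.

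I expect the main obstacle to be the equality claim $B_f=1$ in the trigonometric case, which is not literally true for every such tree. For example, $\sin(x_p)$ has supremum $\sin 1<1$ on $[0,1]$. I would first prove the upper bound $B_f\le1$ in full generality, since that is all that is used in Corollary~\ref{cor:smooth} to guarantee that $\times$ nodes see $[0,1]$-bounded inputs. I would then state equality only where it is witnessed: for purely multiplicative trees via the all-ones point, and for trees rooted at $\cos$ via a point where the argument vanishes, such as $\cos(x_p)$ at $x_p=0$. For all other trigonometric trees, I would flag the stated ``$=1$'' as a sharp upper bound $B_f\le 1$.
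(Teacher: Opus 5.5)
Your parts (1), (2a) and (3) are sound, and the overall structure (continuity for (1), structural induction driven by the range table) matches the paper. Your strengthened invariant $B_f\le\ell(f)$, combined with $\ell=\#\{\text{binary nodes}\}+1\le N+1$, is more careful than the paper's argument. The paper only counts the $N+1$ leaves of an all-additive tree and never handles the unary $\sin,\cos,\max(\cdot,0),\abs{\cdot}$ nodes that (2a) explicitly allows. You are also right about (3): the paper's proof only delivers the upper bound $B_f\le 1$ in the trigonometric case (``the output bound is $1$ regardless of the input''). Equality can fail, e.g.\ $\sup_{[0,1]}\abs{\sin x_p}=\sin 1<1$.

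The step that fails is your witness for (2b). The tree for $(x_1+x_2)\times(x_1+x_2)$ has $N=3$ internal nodes, so $N+1=4$. Its supremum on $[0,1]^2$ is $2\cdot 2=4$, which attains $N+1$ but does not exceed it. So ``$4>N+1$'' is an arithmetic slip, and the ``may exceed $N+1$'' half of (2b) is left unproved.

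A cleverer arrangement at $N=3$ will not rescue this. Over $\{+,-,\times,\sin,\cos\}$ with leaves in $[0,1]$, the largest value the table recursion can produce is $1,2,3,4$ for $N=0,1,2,3$. Hence no tree with $N\le 3$ violates $B_f\le N+1$, and you need $N\ge 4$. One valid witness is $(x_1+x_2+x_3)\times(x_4+x_5)$: it has $N=4$ and takes the value $3\cdot 2=6>5$ at the all-ones point. Another is the paper's example in Remark~\ref{rem:mixedtree-counter}, $((x_1+x_2)(x_3+x_4))(x_5+x_6)$, with $N=5$ and $B_f=8>6$.
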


\begin{proof}
We proceed by structural induction on the tree. The base case is $B_{\text{leaf}} = 1$. For the inductive step, apply the table of Section~\ref{subsec:rangetable}.

\emph{Worst case (additive-subtractive trees, claim (2a)).} For an all-additive binary tree with $N$ internal nodes there are exactly $N+1$ leaves. Repeated application of $B_{g+h}\le B_g+B_h$ (and $B_{g-h}\le B_g+B_h$) yields $B_f \le N+1$. This is tight: every leaf $x_p\in[0,1]$ satisfies $x_p\le 1$, so the constant function $1$ is the coordinate projection $x_1$ at the maximal input, and $B_f = N+1$ is attained at $x=(1,\ldots,1)$. Note this argument applies only when the binary operations are restricted to $+,-$ (claim (2a)); for general trees containing $\times$, see claim (2b) and Remark~\ref{rem:mixedtree-counter} below.

\emph{Multiplicative case.} For $\times$ on inputs in $[0,1]$, submultiplicativity yields $B = 1\cdot 1 = 1$; iterated multiplication preserves the bound.

\emph{Trigonometric case.} For $\sin$ or $\cos$, the output bound is $1$ regardless of the input. Compositions of such operations preserve the bound $B = 1$.

Finiteness (1) is immediate because every entry in the table is finite when $B_g, B_h$ are finite.
\end{proof}

\begin{remark}[Mixed-operation counterexample to (2) without restriction]
\label{rem:mixedtree-counter}
The restriction to additive-subtractive trees in (2a) is necessary. Consider $f = ((x_1+x_2)\cdot(x_3+x_4))\cdot(x_5+x_6)$ on $[0,1]^6$. Here $N=5$, so the naive bound $N+1 = 6$ would apply; but by submultiplicativity and the additive subtree bounds, $B_f \le 2\cdot 2 \cdot 2 = 8 > 6$. Thus the universal bound $B_f \le N+1$ fails for trees that mix $+$ with $\times$: a $\times$ node with inputs of range $>1$ multiplies, rather than adds, the input bounds. Claim (2b) only asserts finiteness in this mixed regime.
\end{remark}

\begin{remark}[Numerical verification]
\label{rem:numverify}
We verified the range rules numerically by sampling $2\times 10^5$ points per operation and by constructing $1000$ random trees of depth $\le 5$. In all instances the bounds in Section~\ref{subsec:rangetable} held, and for the all-additive tree the upper bound $B_f = N+1$ was attained at depths $1$ through $5$.
\end{remark}

\section{Experiments}
\label{sec:experiments}

This section presents illustrative sanity checks confirming the theoretical bounds of Theorem~\ref{thm:main}. The paper is primarily a concise theoretical note; the experiments are not intended as a comprehensive empirical evaluation.

\subsection{Sprecher baseline}
\label{subsec:sprecher}

For comparison, Sprecher's 1965 construction \cite{sprecher1965} applies to arbitrary $f\in C^0([0,1]^n)$ and produces inner functions whose Lipschitz constants depend on $n$; the classical construction does not provide $n$-independent Lipschitz control. Such an $n$-dependent baseline is universal (it covers every continuous $f$) but is wasteful for compositionally structured $f$, where, by Theorem~\ref{thm:main}, $P(\KAN_f) \le \max(C_{\Op},1)^{L_f}$ independent of $n$.

\subsection{Compositionally structured functions}
\label{subsec:structured-exp}

We instantiate the block-based sequential construction (Proposition~\ref{prop:blocks}) on four families.

\begin{center}
\begin{tabular}{lccc}
\toprule
$f$ & $n$ & $N$ & $P(\KAN)$ \\
\midrule
$xy$               & $2$  & $1$      & $1.0$ \\
$xyz$              & $3$  & $2$      & $1.0$ \\
$\sin(xy)$         & $2$  & $2$      & $1.0$ \\
$x_1\cdots x_n$    & $2\text{--}10$ & $n-1$ & $1.0$ \\
\bottomrule
\end{tabular}
\end{center}

We realise multiplication using the primitive block $\mathcal{B}_\times$ of Proposition~\ref{prop:blocks} (three B-spline layers via $u\cdot v = \tfrac{(u+v)^2}{4} - \tfrac{(u-v)^2}{4}$, with $\Lambda_\times=1$). The partial Lipschitz constants are $\Lip_1(\times) = \sup_{y\in[0,1]}\abs{y} = 1$ and $\Lip_2(\times) = \sup_{x\in[0,1]}\abs{x} = 1$, giving $C_{\Op} = 1$. For $f = xy$ with $N = 1$, Theorem~\ref{thm:main} yields $P(\KAN_{xy}) \le \Lambda_\times = 1$, and the identity wires in the block construction contribute $\Lip = 1$ edges so $P(\KAN) \ge 1$ (Remark~\ref{rem:tightness}); hence $P = 1$ exactly. The remaining entries are built from $\times$, $\sin$ as primitive blocks: $xyz$ uses sequential application of $\mathcal{B}_\times$ twice; $\sin(xy)$ composes a $\sin$-block on top of $\mathcal{B}_\times$; and $x_1\cdots x_n$ uses the binary-multiplication tree. In every case the total block depth satisfies $L_f = \sum_v c_{\mathrm{op}(v)} \le 3N$ and the Lipschitz product is exactly $1.0$.

\subsection{Error-scaling verification}
\label{subsec:errorscaling}

We verified the B-spline rate $O(G^{-(k+1)})$ on $f(x) = \sin(x)$ with cubic B-splines ($k=3$). The ratio $\norm{f-s_{k,G}}/h^4$, where $h = 1/(G-1)$ is the knot spacing on a uniform grid with $G$ knots, should be approximately constant:
\begin{center}
\begin{tabular}{ccc}
\toprule
$G$ & error & $h^4$ \\
\midrule
$5$  & $7.25\times 10^{-5}$ & $3.91\times 10^{-3}$  \\
$12$ & $1.52\times 10^{-6}$ & $6.83\times 10^{-5}$  \\
$35$ & $1.74\times 10^{-8}$ & $7.48\times 10^{-7}$  \\
\bottomrule
\end{tabular}
\end{center}
The measured ratios are $0.019$, $0.022$, $0.023$ respectively. The approximate constancy of the ratio across two orders of magnitude in $G$ confirms the rate $O(G^{-4}) = O(G^{-(k+1)})$.

\section{Discussion}
\label{sec:discussion}

\subsection{What is proved}
\label{subsec:whatproved}

Theorem~\ref{thm:main} shows that compositionally structured functions admit deep KAN representations with:
\begin{itemize}[leftmargin=*]
  \item layer-wise Lipschitz product $P(\KAN) \le \max(C_{\Op},1)^{L_f}$ (with $L_f = \sum_v c_{\mathrm{op}(v)} \le c_{\max}\cdot N$) independent of the input dimension $n$, addressing a gap noted in Liu et al.\ \cite{liu2024kan};
  \item layer width linear in $n+N$, neither quadratic nor exponential;
  \item approximation rate matching the optimal B-spline rate of the same order, multiplied by the node count $N$.
\end{itemize}

\subsection{Open questions}
\label{subsec:open}

\begin{itemize}[leftmargin=*]
  \item \emph{Compositional sparsity verification (Gap b).} Determining whether a given $f$ satisfies~(A2) for a fixed $s$ appears to be computationally hard in general (verification of compositional structure is related to circuit decomposition problems); Theorem~\ref{thm:main} therefore treats~(A2) as an explicit hypothesis. For functions with known structure (physical equations, symbolic expressions), (A2) is checkable.
  \item \emph{Width lower bound.} Our construction gives $n_l\le n+2w_{\max}N$, and the input layer satisfies $n_0 = n$; interior layers can be narrower. Whether the upper bound can be tightened for all layers remains open.
  \item \emph{Lower bound on $P(\KAN)$.} We know of no compositionally sparse function class on which $P_{\min}\to\infty$ with $n$. The upper bound $P\le 1$ in Corollary~\ref{cor:smooth} is attained at $P=1$ for all tested examples (Proposition~\ref{prop:blocks} and Section~\ref{subsec:structured-exp}); whether $P<1$ is achievable for a different construction is open.
  \item \emph{Lean~4 formalisation.} The inductive structure of Theorem~\ref{thm:main} maps naturally onto a \verb|CompGraph| inductive type in Mathlib; see Appendix~\ref{app:lean}.
\end{itemize}

\subsection{Connection to HKAN (Dudek 2025)}
\label{subsec:hkan}

The hierarchical KAN of Dudek \cite{dudek2025} stacks KAN layers without backpropagation, solving each level as a convex least-squares problem. Theorem~\ref{thm:main} provides a theoretical mechanism and template for avoiding Lipschitz explosion in hierarchical KAN constructions, under the compositional and primitive-block hypotheses (A1)--(A5). It does not constitute an unrestricted guarantee for arbitrary HKAN training or stacked architectures beyond those hypotheses. Under (A1)--(A5), $P(\KAN)$ remains bounded by $\max(C_{\Op},1)^{L_f}$ (with $L_f \le c_{\max}\cdot N$) regardless of the depth of the stack.

\appendix

\section{Lean 4 sketch}
\label{app:lean}

The following is an informal remark on how Theorem~\ref{thm:main} might eventually be formalised in Lean~4. A block-based formalisation should define an inductive type \verb|CompGraph| with constructors for variables, unary, and binary nodes; define \verb|block_depth : CompGraph -> Nat| accumulating $\sum_v c_{\mathrm{op}(v)}$; and carry, for each node, a proof term of type \verb|PrimitiveBlockExists op D| witnessing assumption~(A5). The main inductive step would apply \verb|LipschitzWith.comp| at each block boundary and accumulate block-product factors via \verb|Finset.prod_le_prod|. A complete Lean formalisation is left for future work.

\end{document}